\documentclass{article}

\PassOptionsToPackage{numbers, compress}{natbib}
\usepackage[preprint]{neurips_2026}
\usepackage[utf8]{inputenc} % allow utf-8 input
\usepackage[T1]{fontenc}    % use 8-bit T1 fonts
\usepackage{hyperref}       % hyperlinks
\usepackage{url}            % simple URL typesetting
\usepackage{booktabs}       % professional-quality tables
\usepackage{amsfonts}       % blackboard math symbols
\usepackage{nicefrac}       % compact symbols for 1/2, etc.
\usepackage{microtype}      % microtypography
\usepackage{xcolor}         % colors

\usepackage{mathtools} % amsmath with fixes and additions
\usepackage{booktabs} % commands to create good-looking tables
\usepackage{tikz} % nice language for creating drawings and diagrams
\usepackage{amssymb}
\usepackage{subcaption}
\usepackage{graphicx}
\usepackage{appendix}
\usepackage{subcaption}
\usepackage{cleveref}
\usepackage{multirow}
\usepackage{comment}
\usepackage{algorithm}
\usepackage{algpseudocode}
\usepackage[multiple]{footmisc}
\usepackage{bm}
\usepackage[british]{babel}
\usepackage{enumitem}
\DeclareMathOperator*{\argmax}{argmax}

\usepackage{amsthm}
\theoremstyle{definition}
\newtheorem{defn}{Definition}[section]
\theoremstyle{proposition}
\newtheorem{prop}{Proposition}[section]
\theoremstyle{lemma}

\theoremstyle{plain}
\newtheorem{thm}{Theorem}[section]
\usepackage{wrapfig}

\usepackage{array}   % for \newcolumntype macro
\newcolumntype{L}{>{$}l<{$}} % math-mode version of "l" column type

\usepackage[most]{tcolorbox}

\newtcolorbox{axiombox}{
  colback=gray!5,
  colframe=black,
  boxrule=0.5pt,
  arc=2pt,
  left=6pt,
  right=6pt,
  top=6pt,
  bottom=6pt
}

\Crefname{figure}{Fig.}{Figs.}

\title{On the QUEST for Uncertainty Quantification via\\Highest Density Regions}% methods for regression}

\author{
  Sam Goring\\
  King's College London\\
  \texttt{sam.goring@kcl.ac.uk}
  \And
  Tom Kuipers \\
  Northeastern University London \\
  \texttt{t.kuipers@northeastern.edu}
  \AND
  Nicola Paoletti \\
  King's College London\\
  \texttt{nicola.paoletti@kcl.ac.uk}
  \And
  David S. Watson \\
  King's College London\\
  \texttt{david.watson@kcl.ac.uk}
}

\begin{document}

\maketitle

\begin{abstract}
Uncertainty quantification (UQ) is essential for reliable decision‑making in safety‑critical applications in probabilistic machine learning.
For regression problems, dominant scalar UQ approaches---notably, those based on proper scoring rules---measure uncertainty via pointwise predictive risk.
This can lead to counterintuitive results when the target statistic is not the conditional expectation.
We propose an alternative framework, in which uncertainty is characterised by the volume of the most probable subset of a distribution's support.
QUEST (Quantifying Uncertainty via highest dEnSiTy regions) is a novel approach to UQ based on the concentration of Lebesgue measure at a distribution's peak(s), evaluated at one or more values of a robustness parameter $\alpha$.
We establish connections between our measures and classical statistics from information theory and economics.
We show that, unlike popular alternatives based on proper scoring rules, QUEST measures of epistemic and aleatoric uncertainty satisfy a set of axioms adapted from the UQ literature, including monotonicity under distributional spread and invariance to location shifts. 
Selective prediction benchmarks confirm that QUEST performs favourably against standard measures such as variance and differential entropy.
\end{abstract}
\section{Introduction}
Uncertainty quantification (UQ) is crucial for making reliable decisions in safety-critical settings with probabilistic machine learning techniques \citep{hullermeier_aleatoric_2021, pml1Book}.
In scalar-valued UQ, entropy-based measures have long been the uncertainty measure (UM) of choice for classification tasks \citep{houlsby_bayesian_2011, kendall_what_2017, charpentier_disentangling_2022}, although that consensus has been challenged by recent works \citep{wimmer2023, schweighofer_introducing_2023}.
These analyses have been extended to regression settings \citep{bulte_axiomatic_2025}, where proper scoring rules (PSRs) \citep{gneiting_strictly_2007} provide a unified conceptual approach \citep{kotelevskii_risk_2025, bulte_uncertainty_2025}.
These methods operationalise UQ as the pointwise risk of predicting with a particular model as opposed to the true distribution \citep{fishkov_uncertainty_2025}. 

Despite their theoretical appeal, PSR-based measures can lead to counterintuitive results, particularly when the target statistic is not the conditional expectation. 
Consider the task of estimating the conditional \textit{mode}, better known as the maximum a posteriori (MAP) value. 
MAP inference arises naturally in many settings, e.g. reinforcement learning and Bayesian modelling. When distributions are unimodal and symmetric, means and modes coincide and standard regression methods can be safely deployed. 
However, means and modes can diverge arbitrarily in the presence of outliers, multimodality, or heavily skewed data. 
In these settings, conditional expectations may lie in a region of low probability density, meaning standard regressors will make predictions that are unlikely or even impossible (see \Cref{fig:splash}). 
Under such conditions, the MAP is arguably a more informative and useful statistical target than the conditional expectation. 

How should we characterise the uncertainty of MAP predictions? Using a standard plug-in estimator like variance could substantially overstate the uncertainty in some cases, as is evident in \Cref{fig:splash}. This reflects a more general shortcoming with many common UM methods, which fail to properly prioritise concentration of measure. We start from the basic premise that a valid UM ought to track this fundamental property, with uncertainty increasing as a monotonic function of a distribution's spread. 
This volume-centric perspective suggests a novel framework for UQ.

\begin{figure}
    \centering
    \includegraphics[width=0.9\linewidth]{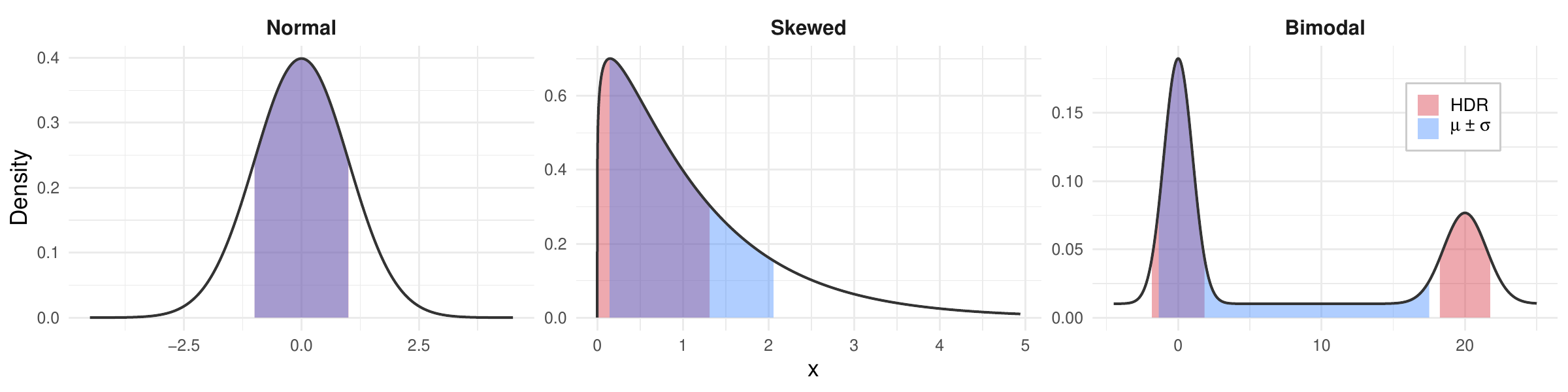}
    \caption{Comparison of confidence sets for the mean (blue, defined as the interval $\mu\pm \sigma$) and the mode (red, defined as the $(1-\alpha)$-high density region, or HDR). 
    % Confidence sets for means and modes can vary widely. 
    The two coincide for symmetric, unimodal distributions such as the standard \textbf{Normal}. 
    In \textbf{Skewed} data, however, variance grows due to tail area mass.
    In \textbf{Bimodal} distributions, the HDR may be disconnected, while confidence intervals can assign positive weight to low-density regions.
    For all three settings, we fix $\alpha = 2 \Phi(-1) \approx 0.317$, where $\Phi$ is the standard normal CDF, so that the HDR coincides with $\mu \pm \sigma$ in the left panel.}
    \label{fig:splash}
\end{figure}

We introduce QUEST, a family of UMs based on the Lebesgue measure of a distribution's highest density region. 
QUEST aligns with intuition in the motivating examples above and offers a flexible, interpretable alternative to PSR-based approaches.
We develop measures of aleatoric, epistemic, and total uncertainty within this framework.
Our primary contributions are: (1) \textbf{Conceptual: }We propose an interpretation of uncertainty based on density concentration and a user-supplied robustness parameter $\alpha$. (2) \textbf{Methodological: }We introduce novel measures of local and global uncertainty in terms of highest density regions, and establish their relationship to well-studied statistics from information theory and econometrics. (3) \textbf{Theoretical: } We show that QUEST measures of epistemic and aleatoric uncertainty are sound w.r.t. a set of UM axioms adapted from previous works \citep{wimmer2023, sale_second-order-dist_2023, bulte_axiomatic_2025}. (4) \textbf{Empirical: }Selective prediction benchmarks confirm that QUEST performs favourably against variance and differential entropy, the dominant UM methods in use today for regression. 

\section{Related work}
\paragraph{Uncertainty quantification in regression}
Many authors have developed frameworks for representing uncertainty in regression via second-order distributions \citep{malinin_regression_2020, meinert_multivariate_2022} (formally defined in \Cref{sec:uncert-rep}) or ensembles \citep{gal_dropout, lakshminarayanan_simple_2017, berry_epistemic_2026}.
Common summary statistics for measuring the uncertainty encoded by such distributions include variance and differential entropy. 

However, unlike in the classification setting, where many authors have investigated the empirical and theoretical performance of UMs \citep{houlsby_bayesian_2011, kendall_what_2017, wimmer2023, sale_second-order-dist_2023}, regression uncertainty has received relatively little attention. 
Theoretical analysis, in the form of axiomatic assessment, extends existing frameworks from the classification literature \citep{wimmer2023, sale_second-order-dist_2023} to the regression setting \citep{bulte_axiomatic_2025, bulte_uncertainty_2025}.
Empirical validation typically involves downstream tasks, such as selective prediction \citep{fishkov_uncertainty_2025}; active learning \citep{fishkov_uncertainty_2025, berry_epistemic_2026}; out of distribution \citep{fishkov_uncertainty_2025, schweighofer_introducing_2023, amini_deep_2020, malinin_regression_2020} and adversarial sample detection \citep{schweighofer_introducing_2023, amini_deep_2020};
low correlation between UMs \citep{fishkov_uncertainty_2025};
and alignment with expected behaviour on toy datasets \citep{lakshminarayanan_simple_2017,meinert_multivariate_2022, valdenegro-toro_deeper_2022, malinin_regression_2020, berry_epistemic_2026}.

\paragraph{Proper scoring rules}
Existing approaches to UQ in regression have been shown to fit a unified framework when viewed as optimality targets for proper scoring rules (PSRs) \citep{fishkov_uncertainty_2025, bulte_uncertainty_2025}.
Introduced by \citet{gneiting_strictly_2007}, PSRs measure uncertainty as the expected pointwise risk of predicting with the learned distribution.
Within the PSR framework, the choice of loss function informs the relevant UM.
Optimising the \textit{Brier score}---the PSR corresponding to squared error loss---yields variance as the optimal expected score. However, the Brier score is proper only for the mean functional. By contrast, the \textit{log score} is strictly proper for the full distribution, and its optimal expected value recovers differential entropy (formally defined below). Variance and differential entropy thus emerge as the natural UMs under squared error and log loss, respectively.

Though loss functions from robust statistics (e.g., Huber loss \citep{huber2009} or the maximum correntropy criterion \citep{feng2015}) may occasionally converge on the MAP, this is not generally guaranteed. On the contrary, \citet{heinrich2014} has shown that the mode functional is not \textit{elicitable}---in other words, there is no loss or scoring function under which the mode is the Bayes predictor.
Thus the PSR framework is uniquely ill-suited to UQ in MAP problems.

\paragraph{Set-based UQ}
The use of sets to quantify uncertainty has been explored through conformal prediction (CP) and credal sets.
CP provides finite sample probabilistic guarantees on prediction correctness, via a predictive set \citep{vovk_algorithmic_2005}.
CP does not explicitly disentangle aleatoric and epistemic uncertainties \citep{sale_aleatoric_2025}, and its ability to measure the former has been questioned \citep{hagos_performance_2026}.
Credal sets utilise an alternative uncertainty representation based on imprecise probability theory \citep{walley_statistical_1991}.
UMs of credal classifiers have been developed based on volume- \citep{sale_is_2023} and distance-based measures \citep{gonzalez-garcia_quantification_2026}; PSRs \citep{hofman_quantifying_2024}; as well as entropy and the generalised Hartley measure \citep{wang_credal_2024}.
Works at the intersection of these approaches provide methods for constructing credal sets \citep{caprio_conformalized_2024, javanmardi_conformalized_2024} and explore links between CP and imprecise high density regions (IHDRs) \citep{caprio_conformal_2025}.
\citet{caprio_conformal_2025} find that a classic transductive CP region at level $\alpha$ is the IHDR at level $\alpha$. 
Though similarly rooted in volume-centric measures, this work is situated within both the imprecise probability and set-valued uncertainty measurement traditions.
In contrast, though utilising a set-based representation, QUEST does not rely on imprecise probabilities and provides scalar-valued UMs.

% \paragraph{Summary} Existing approaches to UQ in regression either rely on risk-based scalar measures tied to mean prediction, or on set-based frameworks oriented towards coverage and imprecise probabilities. While HDRs are well-studied in statistics, they have not been operationalised as uncertainty measures for predictive tasks. Moreover, to our knowledge, no existing work systematically examines UQ for MAP inference. QUEST fills this gap by providing scalar valued MAP aligned UMs via HDRs within a precise probability second order uncertainty representation.

\section{Preliminaries}
\label{sec:uncert-rep}
\paragraph{Notation}
Let $\mathcal X \subseteq \mathbb R^d$ be a continuous feature space.
We denote by $p_\theta(x)$ the probability density function (pdf) evaluated at $X=x$ under the distribution parametrised by $\theta \in \Theta \subseteq \mathbb{R}^k$ (though we occasionally suppress the dependence on $\theta$ for notational convenience).
We assume the existence of some $\theta^*$ such that the true density is $p_{\theta^*}(x)$.
For example, if $X$ is Gaussian, then $\theta^* = \big(\mu^*, \sigma^*\big)$ would represent the true mean and standard deviation.
We write $P(X \in A)$ as shorthand for $\int_{x \in A} ~p(x) ~dx$, where $A \subseteq \mathcal X$.
We denote a Dirac mass at $X=x$ via $\delta_x$.

\paragraph{Highest density regions}
Let $p$ be a pdf with support $\mathcal X \subseteq \mathbb R^d$. 
Then the distribution's $(1 - \alpha)$-\textit{highest density region} (HDR) is defined as $C_\alpha := \{x \in \mathcal X: p(x) \geq t_\alpha(p)\}$,
where $t_\alpha(p)$ is the largest constant such that  $P(X \in C_\alpha) \geq 1 - \alpha$ \citep{hyndman_computing_1996}.
Such a set is maximally informative w.r.t. $p$, providing the tightest possible localisation of $X$ for a fixed confidence level \citep{brown_optimal_1995}.
HDRs are volume-minimizing among measurable sets with fixed mass, and are unique up to null sets when the threshold level set has zero Lebesgue measure.
They come with strong coverage guarantees via the Ghosh-Pratt lemma \citep{ghosh_relation_1961, pratt_shorter_1963}.
We observe that the HDR must contain the true mode for any $\alpha < 1$.

\paragraph{Second-order distributions}
In line with related works \citep{wimmer2023, bulte_axiomatic_2025} we assume a bi-level uncertainty representation in which an agent learns a distribution \emph{over} distributions.
Specifically, the learner assigns a degree of belief to each possible $\theta \in \Theta$.
% \footnote{equivalently a frequentist obtains the sampling distribution of $\theta|x$} given training data $\mathcal{D}$.
These beliefs are represented by a second-order distribution $Q \in \mathcal{Q}(\Theta)$ with pdf $q$, where $\mathcal{Q}(\Theta)$ is the space of probability distributions over $\Theta$. 
To continue with our Gaussian example above, $q$ would represent the uncertainty of our estimators for $\mu^*$ and $\sigma^*$.
We assume that (i) $\theta^* \in \Theta$; and (ii) our method for estimating $\theta^*$ is \textit{consistent} (i.e., guaranteed to converge on the true value w.p. $1$). 
% Under these conditions, $q$ asymptotically approaches $\delta_{\theta^*}$ as sample size increases.
% The consistency assumption is in line with previous works in the literature, and reflects the focus of this work on measures of distributions as opposed to the procedure for learning them.
\paragraph{Uncertainty typology}
%Our framework aligns with a subjective Bayesian epistemology in which we adopt the uncertainty representation seen in \cref{sec:uncert-rep} where predictive distributions arise from a second-order distribution $Q$ over parameters $\theta$, together with a consistent estimator of $\theta^*$. 
% In this work, measures reflecting sources of uncertainty have the following meaning.
\textit{Aleatoric uncertainty} $(AU)$ is the uncertainty inherent to a given data generating process (DGP), which cannot be reduced through sampling or learning.
% We will occasionally distinguish between \textit{true} $AU$, denoted by $AU^*$, and \textit{estimated} $AU$, denoted by $\widehat{AU}$. The latter represents our finite sample estimate of $AU^*$, which will generally be subject to additional noise and error. 
%obtained by measuring the uncertainty in the distribution obtained from $Q$, which we consider to be the best approximation of the true DGP (note this model may differ from our predictive model). 
\textit{Epistemic uncertainty} $(EU)$ is the uncertainty that is reducible in principle, ultimately going to zero in the limit of infinite data under assumptions (i) and (ii).
% For the Gaussian example, a common choice would be to say that $AU$ is determined by $\sigma^*(x)$, while $EU$ is a function of the estimators $\hat \mu, \hat \sigma$.
% For the Gaussian example, a common choice would be to identify $AU^*$ with the true function $\sigma^*(x)$; $\widehat{AU}$ with the estimated function $\hat \sigma(x)$; and $EU$ with the error induced by the estimator $\hat \mu(x)$.
% Such an interpretation is similar to the PSR interpretation of total, Bayes and excess risk \citep{fishkov_uncertainty_2025}. 
\textit{Total uncertainty} $(TU)$ represents a combination of $AU$ and $EU$, possibly but not necessarily in the form of a simple sum \citep{wimmer2023}. 

\paragraph{Information theory}
We briefly review definitions for several fundamental information theoretic quantities \citep{cover2006elements, PolyanskiyWu2025}.
The \textit{differential entropy} of a continuous variable $X$ is defined as $h(p) := - \int_{\mathcal X} p(x) \log p(x) ~dx$.
% For discrete $X$, the \textit{Shannon entropy} is defined as $H(X) := -\sum_x p(x) \log p(x)$. We write $H_2(p)$ for the special case where $X \in \{0,1\}$ (binary entropy), with $P(X=1)=p$.  \textit{Differential entropy} is the continuous analogue, $h(p) := - \int_{\mathcal X} p(x) \log p(x) ~dx$.
When comparing distributions, we often use the \textit{cross-entropy} $h(p, q) := -\int_\mathcal X p(x) \log q(x)~dx$, which is equivalent to $h(p)$ plus the KL-divergence between $p$ and $q$. 
The latter is a famous example of an $f$-\textit{divergence}, a family of distributional dissimilarity measures of the form
 \begin{align*}
     D_f(p ~||~ q) := \int_{\mathcal{X}} f\!\left(\frac{p(x)}{q(x)}\right) q(x)\, dx,
 \end{align*}
where $f: (0, \infty) \to \mathbb{R}$ is a convex function with $f(1) = 0$. Setting $f(t) = t \log t$ recovers the KL-divergence; other choices give the total variation distance ($f(t) = |t-1|/2$), Hellinger distance ($f(t) = (\sqrt{t} - 1)^2$), and $\chi^2$-divergence ($f(t) = (t-1)^2$), among others.

\section{QUEST Measures}
\label{sec:method}
Our guiding intuition is that statistical \textit{confidence} amounts to placing large portions of probability mass in a relatively small region, while statistical \textit{uncertainty} amounts to dispersing that mass more widely across the distribution's support. 
So stated, the claim may seem uncontroversial---yet we show that popular risk-based measures do not always respect this basic requirement.
We begin by introducing two related measures that quantify uncertainty via HDRs: a local measure, which we call \emph{$\alpha$-volume}, and a global measure, which we call \emph{integrated volume}.  
% Both reflect our conception of uncertainty as the concentration of probability density, with $\alpha$-volume capturing concentration at a fixed tolerance level $\alpha$, and integrated volume capturing concentration across all such levels. 
% We establish illuminating connections between our proposed measures and several well-studied statistics from information theory and econometrics.

\paragraph{$\alpha$-volume}
For pdf $p$ with support $\mathcal X \subseteq \mathbb R^d$ and $(1-\alpha)$-HDR $C_\alpha$, the $\alpha$\textit{-volume} is defined as $V_\alpha(p) := \lambda(C_\alpha)$,
where $\lambda$ is the $d$-dimensional Lebesgue measure.
Small values indicate high concentration of density (low uncertainty), while large values indicate low concentration (high uncertainty).
By convention, when $C_\alpha=\emptyset$ (e.g. when $p=\delta_x$), we set $V_\alpha(p)=0$.

This statistic is closely related to differential entropy, which is often proposed as a natural information theoretic UM \citep{cover2006elements, schweighofer2025, fishkov_uncertainty_2025}. However, this measure suffers from several drawbacks. First, unlike its discrete counterpart, differential entropy can be negative. This is a simple consequence of the fact that, whereas probability \textit{mass} is necessarily bounded on the unit interval, probability \textit{density} can exceed $1$. Since many authors take it as axiomatic that any valid UM must be non-negative \citep{wimmer2023, sale_second-order-dist_2023, bulte_axiomatic_2025}, this rules out $h$ directly. Second, differential entropy is sensitive to tail area fluctuations in ways that may not reflect meaningful uncertainty. 
A density with light tails and a density with heavy tails can look identical near the mode yet have very different entropies.

$\alpha$-volume resolves both shortcomings. The precise relationship between our proposed measure and differential entropy is given by the following proposition. (All proofs in \Cref{sec:proofs}.)
\begin{prop}\label{prop:de}
    For any pdf $p$ on $\mathcal X \subseteq \mathbb{R}^d$  and any $\alpha \in (0, 1)$ such that $V_\alpha(p) < \infty$:
    \begin{align*}
        % h(p) = (1-\alpha)\big[\log V_\alpha(p) - D_{\mathrm{KL}}(p(\cdot \mid X \in C_\alpha) \,\|\, u_{\alpha})\big] + \alpha \cdot h(p \mid X \not\in C_\alpha) + H_2(\alpha),
        \log V_\alpha(p) = h(p_\alpha, u_\alpha),
    \end{align*}
    where $p_\alpha := p(\cdot \mid X \in C_\alpha)$ and $u_{\alpha}$ is the uniform density on  $C_\alpha$.
\end{prop}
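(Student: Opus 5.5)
The identity follows by direct computation from the definition of cross-entropy, once we check that both sides are well defined.

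\textbf{Setup.} Fix $\alpha\in(0,1)$ with $V_\alpha(p)<\infty$. By definition of the HDR, $P(X\in C_\alpha)\ge 1-\alpha>0$. Hence $C_\alpha$ is nonempty and has positive Lebesgue measure: a null set carries zero mass under an absolutely continuous $p$. So $0<V_\alpha(p)<\infty$, and the uniform density
\[
u_\alpha(x)=\frac{\mathbf 1\{x\in C_\alpha\}}{V_\alpha(p)}
\]
is well defined. Likewise the conditional density
\[
p_\alpha(x)=\frac{p(x)\,\mathbf 1\{x\in C_\alpha\}}{P(X\in C_\alpha)}
\]
is well defined and is supported inside $C_\alpha$.

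\textbf{Computation.} Expand the cross-entropy. Since $p_\alpha$ vanishes off $C_\alpha$, the integral over $\mathcal X$ reduces to $C_\alpha$, where $\log u_\alpha(x)=-\log V_\alpha(p)$ is constant:
\[
h(p_\alpha,u_\alpha)=-\int_{C_\alpha}p_\alpha(x)\log u_\alpha(x)\,dx=\log V_\alpha(p)\int_{C_\alpha}p_\alpha(x)\,dx=\log V_\alpha(p).
\]
The last step uses that $p_\alpha$ integrates to one.

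\textbf{Main obstacle.} The only delicate point is well-definedness. The integrand $-p_\alpha\log u_\alpha$ would be problematic where $u_\alpha=0$ but $p_\alpha>0$. Such points lie off $C_\alpha$, where $p_\alpha=0$, so no $\log 0$ term carries mass; we use the convention $0\log 0=0$ there. We also need $V_\alpha(p)>0$ so that the logarithm is finite, which the positivity argument in the setup supplies.

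\textbf{Remark.} The statement is exactly the familiar fact that $h(p_\alpha,u_\alpha)=h(p_\alpha)+D_{\mathrm{KL}}(p_\alpha\|u_\alpha)$. It is instructive to record this decomposition afterwards, since it shows $\log V_\alpha(p)\ge h(p_\alpha)$. The proof itself needs only the direct computation above.
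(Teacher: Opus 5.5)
Your proposal is correct and matches the paper's proof: restrict the cross-entropy integral to $C_\alpha$, where $\log u_\alpha = -\log V_\alpha(p)$ is constant, then use that $p_\alpha$ integrates to one. Your check that $V_\alpha(p) > 0$ (because $C_\alpha$ carries mass at least $1-\alpha$ under an absolutely continuous $p$) fills in a point the paper leaves implicit, and your closing remark is the same entropy-plus-KL decomposition the paper records after its proof.
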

Thus when density is flat within the HDR, $\alpha$-volume reduces to a trimmed estimator of an exponentiated entropy functional. Exponentiating guarantees nonnegativity, while trimming makes the statistic robust to tail area behaviour. 

\paragraph{Integrated volume}
In settings where the choice of $\alpha$ is unclear, or where the degree of concentration 
across a range of confidence levels is of interest, we define the integrated volume measure as $IV(p) := \int_{0}^1 ~V_\alpha(p) ~d\alpha$.
This quantity summarises the entire $\alpha \mapsto V_\alpha(p)$ function---what we call \textit{the coverage curve}---and provides a global assessment of concentration. 
(For a visual example, see \Cref{sec:app-levelling}.)
Curves that quickly approach zero as $\alpha$ grows suggest more densely concentrated (and therefore less uncertain) distributions.
We remark that integrated volume is finite for any distribution with bounded support, as well as those with unbounded support and tails that decay at an appropriate rate (e.g., subexponential). 
%exponential, subexponential, or even polynomial tails of index $<1$.
For heavier-tailed distributions with infinite support, we can trim the lower limit of the integral with some $\varepsilon > 0$.

% \paragraph{Relation to econometric quantities}
The coverage curve and summary statistic $IV(p)$ are closely related to the \textit{Lorenz curve} and \textit{Gini coefficient} (respectively), two common tools for measuring income inequality \citep{sen1997economic, cowell2011measuring}. 
% With bounded support, the coverage curve is geometrically equivalent to the Lorenz curve, while integrated volume (the area under this curve) is a linear function of the corresponding Gini coefficient.
The Lorenz curve $L : [0, 1] \to [0, 1]$ plots the cumulative share of total income held by the bottom $z$-fraction of a population, ordered from poorest to richest. By construction, $L$ is non-decreasing, convex, and satisfies $L(0) = 0, L(1) = 1$. Perfect equality corresponds to the diagonal $L(z) = z$; greater inequality corresponds to curves that sag below the diagonal. The Gini coefficient is a scalar summary $G := 1 - 2 \int_0^1 L(z)\, dz$,
which measures the area between the Lorenz curve and the diagonal, normalised so that $G = 0$ for perfect equality and $G \to 1$ as inequality grows.

To translate these notions to densities, we treat the density $p$ as an income distribution: the ``population'' is the support $\mathcal X = [0, 1]^d$ with subsets quantified by their Lebesgue measure, and the ``income'' at each location $x$ is the probability density $p(x)$.\footnote{The following Prop. \ref{prop:gini} holds (with minor amendments) for any density with bounded support. The unit volume assumption is simply for convenience.} Sorting locations from lowest to highest density and accumulating mass yields the \emph{Lorenz $p$-curve}:
\begin{equation*}
    L_p(z) := \int_{\{x : p(x) \leq q_p(z)\}} p(x)\, dx, \quad z \in [0, 1],
\end{equation*}
where $q_p(z)$ is the $z$-quantile of the density values under Lebesgue measure on $\mathcal X$. Intuitively, $L_p(z)$ is the total probability mass in the $z$-fraction of the support with the lowest density values. The \emph{Gini $p$-coefficient} $G_p$ is defined as above, plugging in the Lorenz $p$-curve for $L$.
% A uniform $p$ has $L_p(z) = z$ (the diagonal) and $G_p = 0$, indicating no concentration. A highly peaked $p$ has $L_p$ sagging far below the diagonal and $G_p$ close to $1$.

\begin{prop}\label{prop:gini}
    Let $p$ be a density on $[0, 1]^d$. Then we have the following two identities.
    \begin{itemize}[noitemsep]
        \item[(a)] For every $\alpha \in (0, 1)$, the Lorenz $p$-curve satisfies $L_p\big(1 - V_\alpha(p)\big) = \alpha$.
        Equivalently, the curves $\alpha \mapsto V_\alpha(p)$ and $z \mapsto L_p(z)$ are related by the coordinate flip $(\alpha, V_\alpha) \mapsto (1 - V_\alpha, \alpha)$.
        \item[(b)] The Gini $p$-coefficient satisfies $G_p = 1 - 2 \cdot IV(p)$.
    \end{itemize}
\end{prop}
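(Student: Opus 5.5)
The plan is to pass everything through the distribution function of the density values under Lebesgue measure. Define
\[
F(t) := \lambda\{x \in [0,1]^d : p(x) \le t\}
\quad\text{and}\quad
M(t) := \int_{\{p \le t\}} p(x)\,dx .
\]
Then $q_p(z)$ is the generalised inverse of $F$, and $L_p(z) = M(q_p(z))$.

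For the HDR at level $\alpha$, write $t_\alpha = t_\alpha(p)$. By definition, $C_\alpha = \{p \ge t_\alpha\}$, so $V_\alpha(p) = 1 - \lambda\{p < t_\alpha\}$, and $P(C_\alpha) = 1 - \int_{\{p<t_\alpha\}} p$. I will first work under the regularity assumption used in the paper's uniqueness remark: the level sets $\{p = t\}$ are Lebesgue-null. Under this assumption $F$ is continuous, and $t_\alpha$ is exactly the threshold with $P(C_\alpha) = 1-\alpha$; here I use that mass is continuous in $t$ when level sets are null, so the ``largest $t$ with mass $\ge 1-\alpha$'' achieves equality. Hence $\int_{\{p<t_\alpha\}} p = \alpha$ and $\lambda\{p<t_\alpha\} = 1 - V_\alpha$.

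For part (a), set $z = 1 - V_\alpha = F(t_\alpha)$. The $z$-quantile is then $q_p(z) = t_\alpha$, up to values of $t$ at which $F$ is flat. On such flat stretches the set $\{p \le t\}$ changes only by a null set, so $M$ is unchanged. Therefore
\[
L_p(1 - V_\alpha) = M(t_\alpha) = \alpha,
\]
which is the claimed identity. The coordinate-flip restatement follows immediately: the point $(\alpha, V_\alpha)$ on the coverage curve corresponds to the point $(z, L_p(z)) = (1 - V_\alpha, \alpha)$ on the Lorenz $p$-curve.

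For part (b), I use a geometric area argument rather than a change of variables. Both curves are monotone and live in the unit square, and the flip $(a,v) \mapsto (1-v, a)$ is an isometry of the square. Consider the region under the coverage curve,
\[
R = \{(\alpha, v) : 0 \le v \le V_\alpha\},
\]
whose area is $IV(p)$. Its image under the flip is
\[
\{(z, \ell) : z \ge 1 - V_\ell\}.
\]
By part (a) and monotonicity, $z \ge 1 - V_\ell$ holds if and only if $\ell \le L_p(z)$, up to boundary null sets. So the image is exactly the region under the Lorenz $p$-curve, and it has the same area:
\[
\int_0^1 L_p(z)\,dz = IV(p).
\]
Plugging into the definition of the Gini coefficient gives $G_p = 1 - 2\,IV(p)$.

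To make the ``if and only if'' rigorous, I would write both areas as Lebesgue measures of subsets of $[0,1]^2$ via Fubini (Tonelli). The equivalence itself I would derive from $L_p$ being nondecreasing and $V_\alpha$ nonincreasing in $\alpha$.

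The main obstacle is removing the regularity assumption. When $p$ has plateaus of positive measure, $F$ jumps, $P(C_\alpha) > 1-\alpha$ is possible, and $L_p$ is linear across the plateau while $V_\alpha$ is constant in $\alpha$. The plan there is to check that the two curves still correspond as graphs once vertical segments are filled in (the completed-graph convention). Part (a) might then hold only at the $\alpha$ outside these plateau intervals, or I would need to adopt a convention for $q_p$. Since the failures occur only on countably many jumps, they affect neither area, so part (b) holds in full generality. I would add a short remark to that effect, and note the edge convention $V_\alpha = 0$ for an empty HDR.
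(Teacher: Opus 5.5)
Your proposal is correct. Part (a) follows the paper's route: identify the $z$-quantile at $z = 1 - V_\alpha(p)$ with the HDR threshold, then read off the mass of $C_\alpha^c$. Your version is slightly tighter than the paper's. The paper asserts that $\mu_p$ is strictly decreasing at HDR thresholds, but null level sets do not guarantee this: a gap in the range of $p$ makes $F$ flat, and $t_\alpha$ can sit at the right end of that flat stretch while $q_p(z)$ sits at the left end. As you observe, $M$ is constant there, so the identity survives.

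Part (b) is where you genuinely differ. The paper substitutes $z = 1 - V_\alpha(p)$, writes $dz = -V'_\alpha\,d\alpha$, and integrates $\int_0^1 \alpha\,(-V'_\alpha)\,d\alpha$ by parts. This needs $\alpha \mapsto V_\alpha$ to be absolutely continuous, which is asserted in a footnote rather than proved. It also uses the endpoint values $V_0 = 1$ and $V_1 = 0$; the first holds only if $p > 0$ a.e.\ on the cube, though this is harmless because $L_p$ vanishes on the zero-density part. Your argument runs as follows:
\begin{itemize}
\item the flip $(a,v)\mapsto(1-v,a)$ preserves Lebesgue measure;
\item by (a) and monotonicity of $L_p$, the flip maps the subgraph $R$ into the Lorenz subgraph $S$;
\item the rest of $S$ lies in the graph of $L_p$, which is null by Tonelli.
\end{itemize}
This uses only measurability and monotonicity, with no derivatives of the coverage curve and no boundary terms. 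It also shows structurally why the identity holds, and it is the natural formulation for going beyond the regularity assumption. The paper's computation is shorter, but only because it takes that smoothness for granted.

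One correction to your closing remark on plateaus. Under the paper's definition $L_p(z) = \int_{\{p \le q_p(z)\}} p$, a plateau $\{p = t\}$ of volume $w > 0$ gives $q_p \equiv t$ on a $z$-stretch of length $w$. So $L_p$ jumps by $tw$ and is then constant there: it is a step, not linear. Correspondingly, $V_\alpha$ is constant on an $\alpha$-interval of length $tw$ and then drops by $w$. Hence (a) fails on whole intervals of $\alpha$ of positive length, not just at countably many points. Part (b) survives not because the failures are negligible, but because the flip exchanges these steps exactly (your completed-graph idea is the right mechanism). To make the general case rigorous, you would need to check directly that the symmetric difference between the flipped subgraph and $S$ is null, since (a) is no longer available as the bridge.
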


Thus with pdfs on bounded support, the coverage curve is geometrically equivalent to the Lorenz curve, while integrated volume is a linear function of the corresponding Gini coefficient.
Together with Prop. \ref{prop:de}, these results provide a conceptual and a quantitative bridge between our volume-centric approach and classical tools from information theory and economics. 
% These local and global measures, when applied to first- and second-order distributions, provide a unified framework for HDR-based UQ 

% The $\alpha$-$V_\alpha$ curve, plotted with $\alpha$ on the horizontal axis and $V_\alpha(p)$ on the vertical, is geometrically equivalent to the Lorenz curve of the density-as-income distribution: a uniform $p$ traces a 45° diagonal from $(0, 1)$ to $(1, 0)$, while a concentrated $p$ traces a curve that hugs the axes. Integrated volume is the area under this curve, and a linear function of the corresponding Gini coefficient---making $IV(p)$ a direct analogue of standard inequality measures, with low values indicating concentrated densities (high inequality among density values, low uncertainty) and high values indicating diffuse densities (low inequality, high uncertainty).

\paragraph{QUEST Measures}
With these definitions in place, we introduce the QUEST measures of uncertainty (see \Cref{tab:quest_measures}). 
$AU$ is quantified via the $\alpha$-volume and integrated volume of the ground truth distribution $p_{\theta^*}$ for local and global assessments, respectively. This captures the intuition that the irreducible uncertainty of a given DGP is encoded by the concentration of measure on its support.

\begin{table}[t]
    \centering
    \caption{QUEST measures of uncertainty. $TU$ measures require specification of an admissible $f$-divergence and corresponding function $g_f$; see main text.}
    \label{tab:quest_measures}
    \begin{tabular}{lccc}
        \toprule
        & \textbf{Aleatoric} & \textbf{Epistemic} & \textbf{Total} \\
        \midrule
        \textbf{Local}  & $V_\alpha(p_{\theta^*})$ & $V_\alpha(q)$ & $V_\alpha(p_{\theta^*}) \cdot g_f \big( D_f(p^*_\alpha ~||~ \hat p_\alpha) \big)$ \\
        \addlinespace % Adds a small vertical gap for readability
        \textbf{Global} & $IV(p_{\theta^*})$ & $IV(q)$ & $\int_0^1 V_\alpha(p_{\theta^*}) \cdot g_f \big( D_f(p^*_\alpha ~||~ \hat p_\alpha) \big) ~d \alpha$ \\
        \bottomrule
    \end{tabular}
\end{table}

We define $EU$ similarly, this time operating over the second-order pdf $q$. This reflects the inherent subjectivity of $EU$ in our framework. 
This deviates from standard practice, according to which $EU$ is conceived as a sort of residual---what remains after we subtract $AU$ from $TU$. 
Such approaches have no way of describing the epistemic similarity between agents who are equally confident in their predictions, despite incurring different errors. We revisit this point in \Cref{sect:discussion} below.

It would make little sense to simply add $AU$ and $EU$ to obtain $TU$ in our framework, given that $p$ and $q$ are defined over entirely different spaces. Instead, we propose an interaction between first- and second-order uncertainty. 
Given some set of observations $\mathcal D$, define the posterior predictive distribution $\hat p(x) := \int_\theta ~p_\theta(x \mid \theta) ~q(\theta \mid \mathcal D) ~d \theta$.
Let $C^*_\alpha, \hat C_\alpha$ be the $(1 - \alpha)$-HDRs for $p_{\theta^*}$ and $\hat p$, respectively, and define the conditional pdfs:
\begin{align*}
    p^*_\alpha := p_{\theta^*}(\cdot \mid X \in C^*_\alpha), \quad \hat p_\alpha := \hat p(\cdot \mid X \in \hat C_\alpha).
\end{align*}
We say that an $f$-divergence is $TU$-\textit{admissible} if it is bounded above by some $M_f < \infty$ and finite for distributions with arbitrary support overlap (equivalently, $f(0) < \infty$ and $f'(\infty) < \infty$). Let $g_f : [0, M_f) \mapsto [1, \infty)$ be a continuous, strictly increasing function with $g_f(0) = 1$ and $\lim_{D \to M_f} g_f(D) = \infty$. 
Then local and global $TU$ measures are given by the formulae in \Cref{tab:quest_measures}. 
As a concrete example, plugging in the TV distance and defining $g_f(D)=(1-D)^{-1}$ gives $V_\alpha(p_{\theta^*}) / \big(1 - d_{TV}(p^*_\alpha, \hat p_\alpha)\big)$ for local $TU$. For further examples of admissible function-divergence pairs, see \Cref{app:admissible}. 

This measure is motivated by the intuition that $TU$ ought to depend on objective facts ($\theta^*$) and subjective beliefs ($q$) in such a way that it converges to $AU$ when the two align (i.e., when $p^*_\alpha = \hat p_\alpha$), but grows without limit as error increases.
This holds for standard additive decompositions via variance or information-theoretic quantities.
% (For critical perspectives on the additive decomposition of UMs, see \citep{malinin2021uncertainty, wimmer2023}.)
However, our volume-centric approach requires a multiplicative penalty to respect the units of $AU$. 
% More importantly, our subjective interpretation of $EU$ means that $TU$ is not fully determined by aleatoric and epistemic components.

\paragraph{Choosing $\alpha$}
Our framework introduces a hyperparameter $\alpha$ for local UMs. 
Since $\alpha$ controls the strength of the coverage guarantee, it may be tempting to adopt a frequentist perspective and think of this parameter as a sort of type I error rate for the mode. 
This is a valid interpretation in MAP inference settings.
A more general alternative, however, is that $\alpha$ is a \textit{robustness} parameter---it controls the degree of tail area fluctuations that a user deems practically irrelevant for UQ. 
Such trimmed estimators are widely used in robust statistics \citep{huber2009}.
% This interpretation licenses a more permissive approach to setting $\alpha$ than we often find in null hypothesis significance testing, where many disciplines adopt the default $5\%$ more out of habit than conviction.  By contrast, users may have good reason to focus attention on a narrow region of high density, corresponding to relatively large values of $\alpha$. 
Users who are unsure what value to select may find it informative to inspect the coverage curve directly in search of kinks or plateaus.

\section{Axiomatic Assessment}\label{sc:axioms}
We take an axiomatic approach to assessing the QUEST framework. 
Our axioms closely track those of \citet{wimmer2023}, who initiated the axiomatic UQ research program with their paper on measures for classification; \citet{sale_second-order-dist_2023}, who extended and refined the axioms; and \citet{bulte_axiomatic_2025}, who updated the axioms for regression. 

\paragraph{Partial ordering}
A key step in formalising these axioms is establishing a partial order on distributions in terms of relative uncertainty. 
\citet{bulte_axiomatic_2025} do this directly via the variance, which is unsatisfying in at least two respects.
First, variance offers an incomplete picture of distributional uncertainty. Consider the case of the Cauchy distribution, for which the second moment is undefined. It is plainly evident that some Cauchy densities are more concentrated than others (see \Cref{fig:levelling}A), yet variance provides no tools for distinguishing between them.
Second, and perhaps more troubling, this variance-based partial ordering effectively guarantees the conclusion the authors aim to secure---that variance-based UMs satisfy those same axioms. 

To avoid this circularity, we ground our partial ordering in a primitive notion of distributional spread that is independent of any specific uncertainty functional and holds for arbitrary continuous distributions. We begin with the equivalence relation.
% We begin by characterising spread-equivalence.

\begin{defn}[Spread-equivalence]\label{defn:equiv}
    Let $p, p'$ be pdfs on $\mathbb{R}^d$ with supports $\mathcal{X}, \mathcal{X}'$.
    We say that $p'$ is \textit{spread-equivalent} to $p$, written $p \sim p'$, iff:
    \begin{align*}
        \forall t: \lambda\big( \{x: p(x)>t \} \big) = \lambda\big( \{x: p'(x)>t \} \big).
    \end{align*}
\end{defn}
Spread-equivalent distributions are equally concentrated at all density thresholds.
% Examples of spread-preserving transformations include any composition of rigid motions (translations, rotations, reflections). 
To allay concerns that Defn. \ref{defn:equiv} smuggles in our preferred measure of uncertainty, we point out that spread-equivalence can be restated in terms of a natural and widely accepted divergence-theoretic condition: that a distribution's spread should be quantifiable in terms of its deviation from uniformity. The following proposition makes this precise.

\begin{prop}\label{prop:diverge}
    Let $p, p'$ be pdfs on $\mathbb{R}^d$ with supports $\mathcal{X}, \mathcal{X}'$. For thresholds $0 < \tau_\ell < \tau_u < \infty$, define 
    \begin{align*}
        \mathcal{X}_\tau := \{x : \tau_\ell \leq p(x) \leq \tau_u\}, \quad \mathcal{X}'_\tau := \{x : \tau_\ell \leq p'(x) \leq \tau_u\}, \quad \mathcal{Y}_\tau := \mathcal{X}_\tau \cup \mathcal{X}'_\tau,
    \end{align*}
    and let $u_\tau$ denote the uniform density on $\mathcal{Y}_\tau$. Write $p_\tau$ and $p'_\tau$ for the densities of $p$ and $p'$ conditional on $\mathcal{X}_\tau$ and $\mathcal{X}'_\tau$, respectively. Then $p \sim p'$ iff, for every pair $(\tau_\ell, \tau_u)$ and every $f$-divergence $D_f$:
    \begin{align*}
        D_f(p_\tau \| u_\tau) = D_f(p'_\tau \| u_\tau).
    \end{align*}
\end{prop}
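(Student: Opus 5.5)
The plan is to compute $D_f(p_\tau\|u_\tau)$ explicitly and show that it depends on $p$ only through the distribution of the density values $p(x)$ under Lebesgue measure. That gives the forward direction. For the converse, a single well-chosen $f$ will isolate $\lambda(\mathcal X_\tau)$.

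\textbf{Explicit formula.} Write $m_\tau := P(X\in\mathcal X_\tau)=\int_{\mathcal X_\tau}p\,dx$. Markov's inequality gives $\lambda(\mathcal X_\tau)\le \lambda\{p\ge\tau_\ell\}\le 1/\tau_\ell<\infty$, so $\lambda(\mathcal Y_\tau)<\infty$ and $u_\tau\equiv 1/\lambda(\mathcal Y_\tau)$ on $\mathcal Y_\tau$. When $\lambda(\mathcal Y_\tau)=0$ the objects are degenerate on both sides; I will treat this by convention, as is done for $m_\tau=0$. Otherwise $p_\tau=p/m_\tau$ on $\mathcal X_\tau$ and $p_\tau=0$ on $\mathcal Y_\tau\setminus\mathcal X_\tau$, so
\[
D_f(p_\tau\|u_\tau)=\frac{1}{\lambda(\mathcal Y_\tau)}\Big[\int_{\mathcal X_\tau} f\!\Big(\tfrac{\lambda(\mathcal Y_\tau)}{m_\tau}\,p(x)\Big)dx+\big(\lambda(\mathcal Y_\tau)-\lambda(\mathcal X_\tau)\big)f(0)\Big].
\]
The analogous formula holds for $p'$. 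Since $\mathcal Y_\tau$ is common to both, it suffices to compare the quantities $\lambda(\mathcal X_\tau)$, $m_\tau$ and $\int_{\mathcal X_\tau}f(c\,p)\,dx$ for every constant $c>0$.

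\textbf{($\Rightarrow$).} Let $\nu_p$ be the pushforward of $\lambda$ restricted to $\{p>0\}$ under $x\mapsto p(x)$, viewed as a measure on $(0,\infty)$. Its tail function is $t\mapsto\lambda\{p>t\}$. Spread-equivalence says these tail functions coincide for $p$ and $p'$. Each is finite for $t>0$, so they determine $\nu_p=\nu_{p'}$ on $(0,\infty)$. The three quantities above are then
\[
\lambda(\mathcal X_\tau)=\nu_p([\tau_\ell,\tau_u]),\qquad m_\tau=\int_{[\tau_\ell,\tau_u]}t\,d\nu_p(t),\qquad \int_{\mathcal X_\tau}f(cp)\,dx=\int_{[\tau_\ell,\tau_u]}f(ct)\,d\nu_p(t),
\]
each by the change-of-variables formula. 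All three are therefore equal for $p$ and $p'$, and so the divergences agree.

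\textbf{($\Leftarrow$).} This is the step needing a construction. I will fix $\tau$ and choose
\[
f(t)=(1-t/\varepsilon)_+ \quad\text{with}\quad \varepsilon<\tau_\ell/\tau_u<1.
\]
This $f$ is convex, and $f(1)=0$ because $\varepsilon<1$.

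On $\mathcal X_\tau$ we have $m_\tau\le\tau_u\lambda(\mathcal X_\tau)\le\tau_u\lambda(\mathcal Y_\tau)$. Hence the argument of $f$ satisfies
\[
\frac{\lambda(\mathcal Y_\tau)}{m_\tau}\,p(x)\ \ge\ \frac{\tau_\ell}{\tau_u}\ >\ \varepsilon,
\]
so the integral term vanishes. The same bound holds for $p'$. Since $f(0)=1$, the formula reduces to
\[
D_f(p_\tau\|u_\tau)=1-\frac{\lambda(\mathcal X_\tau)}{\lambda(\mathcal Y_\tau)}.
\]
Equality of the divergences for $p$ and $p'$ therefore gives $\lambda\{\tau_\ell\le p\le\tau_u\}=\lambda\{\tau_\ell\le p'\le\tau_u\}$ for all $0<\tau_\ell<\tau_u$.

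Finally, I recover the tail functions by two monotone limits. First let $\tau_u\uparrow\infty$; continuity from below, with finiteness ensured by Markov, gives $\lambda\{p\ge s\}=\lambda\{p'\ge s\}$ for every $s>0$. Then let $s\downarrow t$ for $t>0$, which gives $\lambda\{p>t\}=\lambda\{p'>t\}$. The case $t\le 0$ does not arise, since Definition~\ref{defn:equiv} is only meaningful for $t>0$; at $t=0$ both sides are the measure of the support, which I would either note may be infinite or exclude.

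\textbf{Main obstacle.} The only delicate points are in the converse: choosing a single admissible $f$ that kills the integral term uniformly, and handling the degenerate cases $m_\tau=0$ and $\lambda(\mathcal Y_\tau)=0$. The bound $m_\tau\le\tau_u\lambda(\mathcal Y_\tau)$ resolves the first.
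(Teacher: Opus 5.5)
Your proof is correct. The forward direction is essentially the paper's: the paper gets $V_\tau=V'_\tau$ and $m_\tau=m'_\tau$ from a layer-cake formula in $\mu_p$, then argues that the conditional law of $p(X)$ on the window is determined by $\mu_p$. That is your observation that the pushforward $\nu_p$ is fixed by its tail function.

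The converse is where you genuinely differ.
\begin{itemize}
\item The paper uses the whole family of power divergences $f_k(t)=t^k-1$, $k\ge 2$. These give the moment identities $V_\tau\,\mathbb{E}[Y^k]=V'_\tau\,\mathbb{E}[Y'^k]$ for the rescaled density values. Weierstrass density of polynomials on a compact $K\subset(0,\infty)$ then yields $V_\tau\,d\nu=V'_\tau\,d\nu'$, and $V_\tau=V'_\tau$ is read off from total mass. Finally the paper notes that $\mu_p-\mu_{p'}$ is constant and kills the constant with Markov.
\item Your single hinge $f(t)=(1-t/\varepsilon)_+$ vanishes on $[\varepsilon,\infty)$, which contains every value of $p_\tau/u_\tau$ on $\mathcal{X}_\tau$. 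The divergence therefore collapses to $1-\lambda(\mathcal{X}_\tau)/\lambda(\mathcal{Y}_\tau)$, and $V_\tau=V'_\tau$ follows in one line with no moment problem.
\end{itemize}

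The paper's route shows the converse already follows from one fixed, smooth, $\tau$-independent family of divergences. It also recovers the full law of $Y$ on each window, though that is never used. Your route is shorter and more transparent: it makes plain that the converse only ever uses window volumes, through the $f(0)$ mismatch term against the shared reference $u_\tau$.

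Your monotone-limit ending also works directly with the closed windows $\{\tau_\ell\le p\le\tau_u\}$ of the statement. The appendix proof instead switches to half-open windows $\{\tau_\ell<p\le\tau_u\}$, so that window volumes become $\mu_p(\tau_\ell)-\mu_p(\tau_u)$.

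Two small tidy-ups:
\begin{itemize}
\item You need not exclude $t=0$. Continuity from below gives $\lambda\{p>0\}=\lim_{t\downarrow 0}\lambda\{p>t\}$ in $[0,\infty]$, and for $t<0$ both sides are $\infty$.
\item Handle degenerate windows by the convention that the hypothesis asserts both conditionals are defined or both undefined. This forces $\lambda(\mathcal{X}_\tau)=0\iff\lambda(\mathcal{X}'_\tau)=0$. The paper needs the same convention, since it only treats pairs with $V_\tau,V'_\tau>0$.
\end{itemize}
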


The universal quantification over thresholds and $f$-divergences makes Prop. \ref{prop:diverge} especially potent.
Next, we define a partial order on distributions based on the following notion of dispersion.
\begin{figure}
    \centering
    \includegraphics[width=0.9\linewidth]{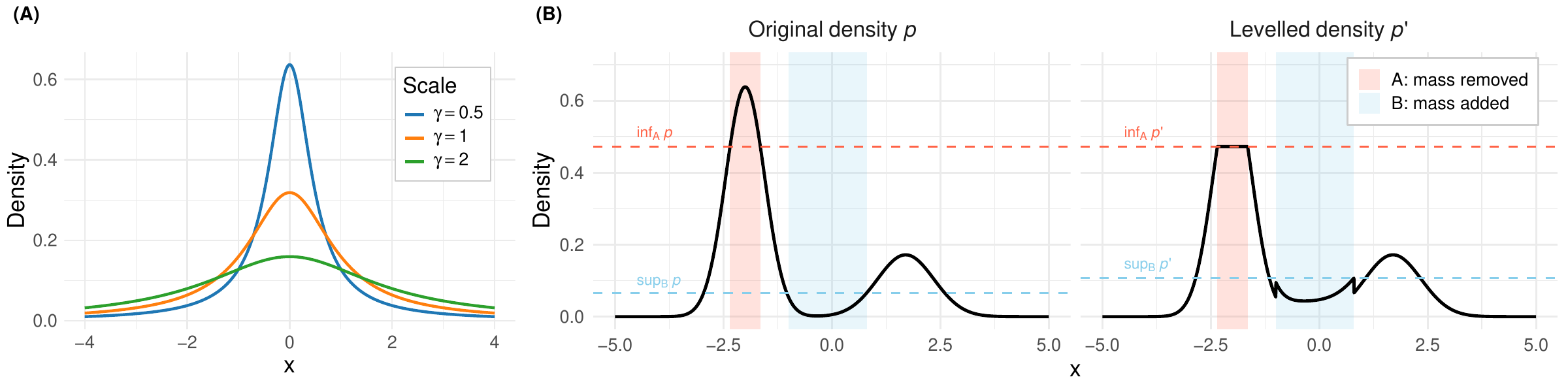}
    \caption{\textbf{(A)} Three Cauchy densities with increasing scale parameters. Levelling correctly orders the distributions by their relative uncertainty, whereas variance is undefined for all three. \textbf{(B)} Illustrative example of levelling on a bimodal distribution, where mass is redistributed from a peak to a valley.}
    \label{fig:levelling}
\end{figure}

\begin{defn}[Levelling]\label{def:levelling}
    Let $p, p'$ be pdfs on $\mathbb{R}^d$ with supports $\mathcal{X}, \mathcal{X}'$.
    We say that $p'$ is a \textit{levelling} of $p$, written $p \preceq p'$, if either of the following conditions holds.
    \begin{itemize}[noitemsep]
        \item[$\mathrm{L}1$:] There are regions $A \subset \mathcal X$ and $B \subseteq \mathcal X' \setminus A$ such that: 
        \begin{itemize}[noitemsep]
            \item[(a)] $P(X \in A) - P'(X \in A) = P'(X \in B) - P(X \in B) \geq 0$;
            \item[(b)] for all $x \not\in A \cup B, ~p(x) = p'(x)$;
            \item[(c)] for all $x \in A, p(x) \geq p'(x)$; and
            \item[(d)] $\inf_A p \geq \sup_B p$ and $\inf_A p' \geq \sup_B p'$.
        \end{itemize}
        \item[$\mathrm{L}2$:] There exists some density $p'' \sim p$ such that $p'' \preceq p'$ via $\mathrm{L}1$.
    \end{itemize}
    If $p \preceq p'$ and $p' \not \preceq p$, then we say that $p'$ is a \textit{strict levelling} of $p$, and write $p \prec p'$.
\end{defn}
Intuitively, a levelling of $p$ is a sort of Robin Hood transfer that ``flattens'' a pdf by redistributing 
%from high density to the mass-rich to the mass-poor, 
from high- to low-density regions, 
possibly on an expanded support.
Items (a) and (b) of $\mathrm{L}1$ are conservation-of-mass principles, guaranteeing that any reduction of mass in $A$ is exactly offset by an increase of mass in $B$, with no further changes beyond these subsets (see \Cref{fig:levelling}B).
$\mathrm{L}1$(c) ensures that we do not redistribute mass within $A$ in such a way that density increases in a subregion despite the transfer to $B$.
$\mathrm{L}1$(d) is a level-set condition, which states that we can only redistribute to the point of equality between subsets---possibly less, but no more. 
This prevents us from replacing one peak with another.
$\mathrm{L}2$ allows us to compare distributions that we otherwise could not, e.g. pdfs with opposite skew.
% To build on the economic analogy, levelling is a Robin Hood style redistribution that takes from the mass-rich and gives to the mass-poor.
We remark that $\preceq$ is reflexive and transitive, but antisymmetric only up to spread-equivalence. Thus $\preceq$ is a preorder on pdfs that induces a partial order on equivalence classes. In addition, levelling is closed under spread-equivalence: if $p \preceq p'$, $p \sim p_0$, and $p' \sim p'_0$, then $p_0 \preceq p'_0$.

% If $p \preceq p'$ and $p' \preceq p$, we say that the two distributions are \textit{levelling-equivalent} and write $p \sim p'$. This can occur, for example, in the case of shape-preserving shifts, such as when transforming $U(0,1)$ to $U(1, 2)$. 

With spread and levelling defined, we adapt the following definitions from previous works \citep{wimmer2023, sale_second-order-dist_2023, bulte_axiomatic_2025}. 

\begin{defn}[Spreads and shifts]\label{def:spread}
    Let $\theta \sim Q, \theta' \sim Q'$ be random vectors with $Q, Q' \in \mathcal Q(\Theta)$. We say that $Q'$ is a:
    \begin{itemize}[noitemsep]
        \item \textit{mean-preserving spread} of $Q$ iff $q \preceq q'$ and $\mathbb E_Q[\theta] = \mathbb E_{Q'}[\theta']$.
        \item \textit{spread-preserving location shift} of $Q$ iff $q \sim q'$ and $\mathbb E_Q[\theta] \neq \mathbb E_{Q'}[\theta']$.
    \end{itemize}
\end{defn}

We now introduce the modified axioms.
\begin{axiombox}
\begin{itemize}[noitemsep]
    \item[$\mathrm{A}0$:] $TU, AU$ and $EU$ should be non-negative.
    % \item[A1:] $EU(q)=0$ iff $Q = \sum_{i < \infty} w_i~\delta_{\theta_i}\text{, where} \sum_{i<\infty} w_i = 1$, $\forall i:w_i \geq 0$.
    \item[$\mathrm{A}1$:] For all $Q \in \mathcal Q(\Theta), EU(q) \geq EU(\delta_\theta)=0$. 
    \item[$\mathrm{A}2$:] If $q_\ell \preceq q \preceq q_u$, then $EU(q_\ell) \leq EU(q) \leq EU(q_u)$.
    \item[$\mathrm{A}3$:] If $p_{\ell} \preceq p \preceq p_{u}$, then $AU(p_{\ell}) \leq AU(p) \leq AU(p_{u})$.
    \item[$\mathrm{A}4$:] If $Q'$ is a mean-preserving spread of $Q$, then $EU(q) \leq EU(q')$ (weak) or \mbox{$EU(q) < EU(q')$} (strict); the same holds for $TU$.
    \item[$\mathrm{A}5$:] If $Q'$ is a spread-preserving location shift of $Q$, then $EU(q) = EU(q')$.
\end{itemize}
\end{axiombox}
Axiom $\mathrm{A}0$ is a basic requirement intended to preserve the interpretability of UMs.

$\mathrm{A}1$ formalizes the intuition that epistemic uncertainty is minimized by a Dirac mass on some $\theta \in \Theta$. 
As noted above, the second-order distribution for a consistent learning procedure approaches $\delta_{\theta^*}$ as sample size grows.

$\mathrm{A}2$ and $\mathrm{A}3$ share a similar structure and should be taken in tandem. They say that partial orderings on second- and first-order distributions imply corresponding inequalities of epistemic and aleatoric uncertainty, respectively. 
These axioms are justified by Defn. \ref{def:levelling}---the flatter and more spread out a distribution is, the greater its uncertainty. 

$\mathrm{A}4$ and $\mathrm{A}5$ are intended to capture the intuition that transformations that only impact a distribution's spread should increase uncertainty, while those that only impact location should not. 

\begin{thm}[Soundness]
    QUEST measures of $AU$ and $EU$ satisfy axioms $\mathrm{A}0$-$\mathrm{A}5$ for all $\alpha \in (0,1)$, with weak inequality in the case of $\mathrm{A}4$.
    Global $EU$ satisfies $\mathrm{A}4$ (strict) iff $q \prec q'$.
    QUEST $TU$ violates $\mathrm{A}4$ (weak and strict).
\end{thm}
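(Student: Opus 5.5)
The whole argument runs through the distribution function of density values, $\Lambda_p(t) := \lambda(\{x : p(x) > t\})$. Every QUEST $AU$ and $EU$ measure is a functional of $\Lambda_p$ alone, so facts about levelling translate into facts about $\Lambda_p$.

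\textbf{Step 1: reduction to $\Lambda_p$.} By construction, $t_\alpha(p)$ and $V_\alpha(p)$ depend only on $\Lambda_p$: the mass above a threshold $t$ equals $\int_t^\infty \Lambda_p(s)\,ds + t\,\Lambda_p(t)$ (layer-cake). Hence $p \sim p'$ implies $V_\alpha(p) = V_\alpha(p')$ for every $\alpha$, and therefore $IV(p) = IV(p')$.

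This gives $\mathrm{A}5$ immediately, since a spread-preserving location shift has $q \sim q'$. It also reduces case $\mathrm{L}2$ of the levelling definition to case $\mathrm{L}1$.

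$\mathrm{A}0$ is trivial because Lebesgue measure is non-negative. For $\mathrm{A}1$, use the paper's convention that $V_\alpha(\delta_\theta) = 0$, so $EU(\delta_\theta) = 0 \le EU(q)$; the global version follows by integrating over $\alpha$.

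\textbf{Step 2: the key monotonicity lemma.} The claim is that if $p \preceq p'$ via $\mathrm{L}1$, then $V_\alpha(p) \le V_\alpha(p')$ for all $\alpha$. Equivalently, for each volume $v$, the maximal mass capturable in a set of volume $v$ does not increase: $M_p(v) \ge M_{p'}(v)$. Since $V_\alpha$ is the smallest $v$ with $M(v) \ge 1-\alpha$, this ordering of $M$ gives the ordering of $V_\alpha$.

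To prove $M_p(v) \ge M_{p'}(v)$, fix an optimal set $S'$ for $p'$ and compare $P(S)$ with $P'(S')$ for a well-chosen $S$ of the same volume. Write $P'(S') = P'(S' \setminus (A\cup B)) + P'(S' \cap A) + P'(S' \cap B)$.
\begin{itemize}
\item Off $A \cup B$, the two densities agree by (b).
\item On $A$, we have $p \ge p'$ by (c), so mass in $S' \cap A$ only grows under $p$.
\item On $B$, $p'$ has gained mass, so the term $P'(S' \cap B)$ is the problem.
\end{itemize}
The plan is to use (d) to handle the $B$ term. Under $p'$, the set $S' \cap B$ has density at most $\inf_A p'$. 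Under $p$, we can swap $S' \cap B$ for an equal-volume subset of $A \setminus S'$, whose $p$-density is at least $\sup_B p$. The mass lost in $B$ is at most $P'(B) - P(B) = P(A) - P'(A)$ by (a), and this should be compensated by the surplus of $p$ over $p'$ on $A$.

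I expect this exchange argument to be the main obstacle. The case analysis depends on whether $S'$ already contains all of $A$, and when $S' \supseteq A$ the surplus on $A$ must be counted directly. Cleaner alternative: establish majorisation of the decreasing rearrangements, $\int_0^v p^\ast \ge \int_0^v p'^\ast$, which is exactly $M_p \ge M_{p'}$. A Robin Hood transfer satisfying (d) is a standard majorisation-decreasing operation (the Hardy--Littlewood--Pólya setting), and Prop.~\ref{prop:gini}'s Lorenz-curve picture makes this natural. Combining with Step 1 for $\mathrm{L}2$ and transitivity yields $\mathrm{A}2$ and $\mathrm{A}3$ for both the local and global measures. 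Weak $\mathrm{A}4$ for $EU$ follows because a mean-preserving spread has $q \preceq q'$.

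\textbf{Step 3: strict $\mathrm{A}4$ for global $EU$.} For the "if" direction, $q \prec q'$ means $q' \not\preceq q$, so in particular $q \not\sim q'$. Strict inequality should then come from this non-equivalence. If $V_\alpha(q) = V_\alpha(q')$ held for all $\alpha$, the coverage curves, and hence $M$, would coincide. Then $\Lambda$ would coincide, since $M$ determines the rearrangement, giving $q \sim q'$, a contradiction. So the curves differ at some $\alpha$, and by monotonicity and right-continuity they differ on a set of positive measure, which makes $IV(q) < IV(q')$ strict.

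For the "only if" direction: a mean-preserving spread already has $q \preceq q'$. If $q' \preceq q$ as well, Step 2 gives equality of $IV$, so strictness fails.

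Locally, strictness can fail at a particular $\alpha$, e.g. when the transfer lies entirely outside $C_\alpha$. This explains why the theorem only claims weak $\mathrm{A}4$ for local $EU$.

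\textbf{Step 4: violation for $TU$.} Construct an explicit counterexample in which $\hat p$ is close to $p_{\theta^*}$ under a point-mass $q$, while a mean-preserving spread $q'$ yields a $\hat p'$ whose truncated HDR conditional is closer to $p^*_\alpha$.

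A concrete choice: let $p_{\theta^*} = N(0, \sigma^{*2})$, and let $q$ be a Dirac mass at $(0, \sigma_0)$ with $\sigma_0 \ne \sigma^*$. A spread over $\sigma$ that keeps $\mathbb E[\theta]$ fixed makes the predictive mixture wider or narrower in the appropriate direction. By picking parameters, this can reduce $D_f(p^*_\alpha \| \hat p_\alpha)$ and hence reduce $TU$, which violates both weak and strict $\mathrm{A}4$.
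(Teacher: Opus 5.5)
\textbf{Steps 1--3.} These follow essentially the paper's route. You reduce everything to the level-set volume function (equivalently the decreasing rearrangement $p^\downarrow$), absorb $\mathrm{L}2$ via spread-equivalence, and show that $\mathrm{L}1$ yields the majorisation $\int_0^v p^\downarrow \ge \int_0^v p'^\downarrow$. You then pass to $V_\alpha$ through the generalised inverse and use right-continuity of $\alpha\mapsto V_\alpha$ for the strict global claim.

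You can drop the exchange argument. In place of the appeal to Hardy--Littlewood--P\'olya, use the paper's layer argument. By $\mathrm{L}1$(d), every height removed on $A$ is at least $\inf_A p' \ge \sup_B p'$, which bounds every height added on $B$. Together with (a), this gives $\int (p-c)_+\,d\lambda \ge \int (p'-c)_+\,d\lambda$ for all $c\ge 0$.

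This step, like your ``$p'$ has gained mass on $B$'' and the paper's $b := p'-p\ge 0$, needs $p'\ge p$ on $B$. $\mathrm{L}1$ as written does not impose this. Without it, one can keep $p=p'$ on $A$ and concentrate mass inside $B$ at heights not exceeding $\inf_A p'$, which breaks the monotonicity lemma. So state it as an explicit hypothesis.

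\textbf{Step 4 is the genuine gap.} The sentence ``by picking parameters, this can reduce $D_f(p^*_\alpha\|\hat p_\alpha)$'' is the entire content of the counterexample, and for a scale spread it is not automatic. Mixing over $\sigma$ with $\mathbb{E}[\sigma]=\sigma_0$ sharpens the centre while fattening the tails. The coverage $\mathbb{E}[2\Phi(c/\sigma)-1]$ of $[-c,c]$ is concave in $\sigma$ only where $c/\sigma>\sqrt{2}$. Hence a small spread widens the HDR when $z:=\Phi^{-1}(1-\alpha/2)>\sqrt2$ and narrows it when $z<\sqrt2$. Even then, the width alone does not determine the divergence between the truncated conditionals.

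Your route can be closed, but it has to be shown. For example, take $z>\sqrt2$, $\sigma^*/\sigma_0>e^{z^2/2}$ and a small spread. Then, before and after the spread, $\hat p_\alpha\ge p^*_\alpha$ on $\hat C_\alpha\subseteq C^*_\alpha$. Hence $d_{TV}(p^*_\alpha,\hat p_\alpha)=1-P^*_\alpha(\hat C_\alpha)$, which strictly decreases as $\hat C_\alpha$ widens.

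There is a second problem: spreading only $\sigma$ with $\mu$ pinned puts $q'$ on a line in $\Theta\subseteq\mathbb{R}^2$. So $q'$ is not a density, and $q\preceq q'$ is undefined.

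\textbf{The paper's mechanism, made rigorous.} The paper avoids computation by starting from a $q$ under which $TU=\infty$. There the predictive truncated conditional is mutually singular with $p^*_\alpha$, so $d_{TV}=1$. Any spread that produces overlap of positive measure then gives finite $TU$. Do not place the mass at $\hat\sigma=0$ as the paper does: a mean-preserving spread must keep $\mathbb{E}[\hat\sigma]=0$, which forces $\hat\sigma\equiv 0$. Instead:
\begin{itemize}
\item Take $q,q'$ uniform on nested boxes centred at $(\mu_0,\sigma_0)$ with $\sigma_0>0$.
\item Choose $|\mu_0|$ large enough that the small box's predictive HDR misses $C^*_\alpha$.
\item Make the large box wide enough in $\mu$ that its predictive HDR overlaps $C^*_\alpha$.
\end{itemize}
This is an $\mathrm{L}1$ levelling, with $A$ the small box and $B$ the difference, and it preserves the mean. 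It violates weak (hence strict) $\mathrm{A}4$ for any given $\alpha$.
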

The result is noteworthy because alternative measures of $AU$ and $EU$ for regression generally fail to satisfy some of these axioms. For example, differential entropy violates $\mathrm{A}0$, while variance violates $\mathrm{A}2$. Both measures violate the strict version of $\mathrm{A}4$. Other results depend on limiting the analysis to an exponential family of distributions; for full results, see \citep{bulte_axiomatic_2025}.

Some mean-preserving spreads can reduce $TU$ in our framework. Specifically, this occurs when the original model is overconfident, leading to improved calibration under a levelling of $q$.
This is a feature, not a bug---$TU$ should reward belief revisions that bring an agent's predictions closer to the truth, even at the cost of increasing $EU$. Otherwise, we must either reward miscalibration or abandon our commitment to a subjective $EU$ measure. 
We choose to bite neither bullet, and instead reject $\mathrm{A}4$ as unsuitable for $TU$. 
For further discussion on the uncertainty axioms, see \Cref{app:axioms}. 

\section{Experiments}
\label{sec:experiments}

Code for reproducing all results is available online.\footnote{\url{https://github.com/sjgoring/modal-uq-public}.}
We evaluate UMs in settings designed to isolate behaviour under
unimodal and multimodal distribution structure. 
We focus on \textit{selective prediction}, in which predictions are made on only the top proportion $r$ of most certain samples as ranked by some UM. A good measure should lead to monotonically increasing loss on a test set as $r$ increases.
%
%\paragraph{Selective Prediction.}
\paragraph{Design}
Given a training dataset $\mathcal D := \{x_i, y_i\}_{i=1}^n$ sampled from a synthetic DGP, our goal is to estimate the MAP $y^* := \argmax_{y} ~p_{\theta^*}(y \mid x)$\footnote{We predict $Y \mid x$, hence we change from speaking of distributions $X$,$\Theta$ to conditional distributions $Y \mid X$, $\Theta \mid X$.} for each $x \in \mathcal X$ in a test set. 
Methods are provided access to the same deep ensemble \citep{lakshminarayanan_simple_2017}, designed with mixture of experts (MoE) layers \citep{jacobs1991adaptive}, to produce a posterior predictive $\hat p(y \mid x) := \int_{\theta} ~p_\theta(y \mid x) ~q(\theta \mid x, \mathcal D) ~d\theta$.
We compute MAP estimates $\hat y$ via grid search over $\hat p$ and evaluate performance using the log-likelihood ratio loss $L(\hat y) = \log p_{\theta^*}(y^* \mid x) - \log p_{\theta^*}(\hat y \mid x)$.
We report loss--retention curves in \Cref{fig:selective}, where lower area under the curve denotes a more accurate UM.
% Given model predictions $\hat{y}(x)$ and an uncertainty score $u(x)$ for each test input, we rank samples by uncertainty and retain the least-uncertain proportion $r \in [0,1]$.
% We report loss--retention curves, utilising log-likelihood ratio loss.
% We say an uncertainty measure performs better it yields lower average loss at higher retention.

\label{sec:data}
We use three DGPs of increasing structural complexity: unimodal Gaussian, unimodal skewed and multimodal mixture.
\label{sec:models}
%We use a deep ensemble \citep{lakshminarayanan_simple_2017} with mixture of experts (MoE) layers \citep{jacobs1991adaptive, shazeer2017outrageously} [TODO - parameters].
Our ensemble has $M=10$ networks with $K=3$ experts per network, where each expert is a conditional unimodal Gaussian estimator.
%\paragraph{Uncertainty Measures and Metrics}
Evaluated UMs include $TU$ measures of variance, differential entropy, local QUEST ($\alpha$-volume) and global QUEST (integrated volume). 
%Each measure is decomposed into total (TU), aleatoric (AU), and epistemic (EU) components.
We assess performance from the perspective of a density oracle with access to $p_{\theta^*}$; results using maximum likelihood approximations are reported in \Cref{appx:selective-no-oracle}. 
% (For a discussion, see \Cref{sect:discussion}.) 
% That is, for the purpose of UM calculation, instead of estimating the true distribution (see \citep{schweighofer2025}) we evaluate the measures assuming access to a true density.
% This is utilised, for example in the TV-distance term in both local and global QUEST TU.
%for example the KL-divergence term in differential entropy EU.
% We acknowledge the limitation of this choice in \cref{sect:discussion}.
%Evaluation focuses on mode-aware performance:
%modal absolute error (MoAE), modal squared error (MoSE), selective prediction AURC, and active learning learning curves versus labelled budget.
We repeat each experimental configuration 20 times, and plot average values with their associated standard errors. Further detail can be found in \Cref{sec:appendix_experiments} where we also provide details of a supplementary active learning experiment.

\paragraph{Results}

We hypothesised that QUEST would provide more reliable uncertainty rankings by measuring density concentration around dominant high-density regions, thereby improving performance in selective prediction.
The inability of pointwise risk estimators to correctly quantify uncertainty is expected to be more pronounced in skewed and multimodal settings.
%, and we would expect QUEST to outperform competitors in these settings.
% Our results are outlined in Fig. \ref{fig:selective}. 
We find that global QUEST demonstrates superior performance above not only the random baseline, but variance, entropy and local QUEST in all trials. 
(Poor performance across UMs in the Gaussian setting is likely attributable to the unnecessary complexity of a deep MoE ensemble for this case.)
Local QUEST fares no worse than competitors in all settings and in the bimodal outperforms variance. 
% We observe an unexpected inverse relation between data complexity and QUEST performance, which suggests QUEST is a measure with utility in settings beyond those which first motivated its creation. 
Results of supplementary active learning experiment shows QUEST performance in line with that of variance and entropy.

\begin{figure}
    \centering
    \includegraphics[width=0.9\linewidth]{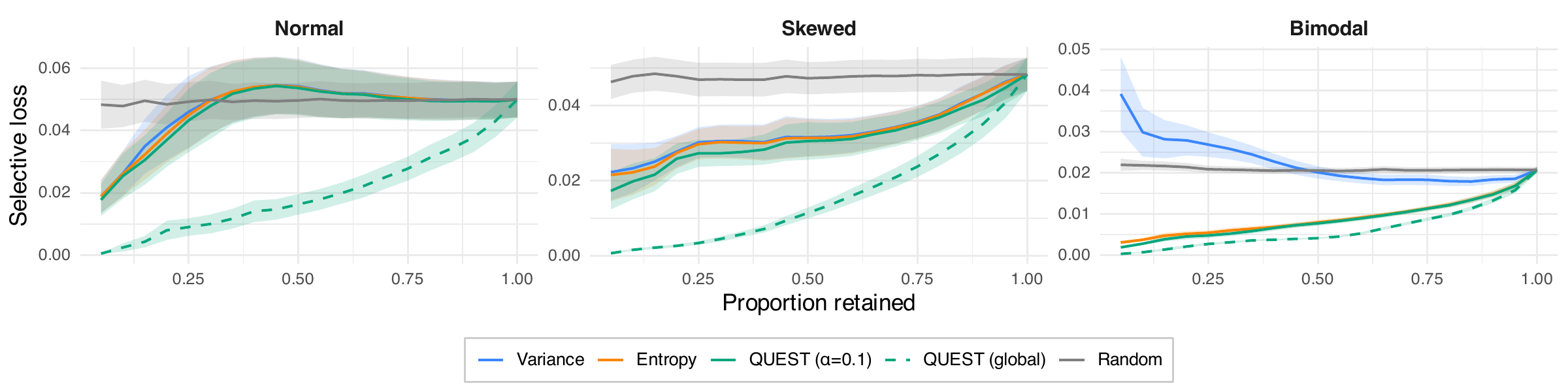}
    \caption{Selective learning curves for deep MoE ensembles on three different DGPs. Shading represents standard errors across 20 trials.}
    \label{fig:selective}
\end{figure}

\section{Discussion}\label{sect:discussion}

%\paragraph{Contributions} 
At a conceptual level, QUEST makes two independent contributions. First, we propose a volume-centric UQ framework that quantifies uncertainty via concentration of measure at one or several thresholds $\alpha$. 
Second, we adopt a purely subjective notion of $EU$ that is not truth-relative. Though these components could in principle be decoupled, we present them as a unified whole. The former follows from basic geometric intuitions; the latter from careful reflection on the proposed UM axioms. For a more sustained discussion on those axioms and the trade-offs they imply, see \Cref{app:axioms}.

%\paragraph{Limitations }
%\label{sec:limitations}
Several limitations merit acknowledgment. First, our theoretical results assume densities are absolutely continuous w.r.t. Lebesgue measure, ruling out distributions with atoms or support on lower-dimensional submanifolds. Extending the framework to handle such cases---for example, via reference measures other than Lebesgue---is a natural direction for future work. Second, $\alpha$-volume and integrated volume can be costly to compute for high-dimensional or non-analytic densities, where HDR boundaries must be approximated numerically. Efficient estimation procedures, perhaps leveraging recent advances in normalising flows \citep{Papamakarios2021} or score-based density estimation \citep{song2020}, would substantially broaden the framework's applicability. 

% Our conception of uncertainty brings some limitations, namely because TU and EU measure beliefs rather than intrinsic quantities, they are inherently subjective.  
% Consequently, it is possible---in cases of overconfidence---for estimated total uncertainty to fall below the ground-truth aleatoric uncertainty.  
% Such behaviour is consistent with the subjective Bayesian stance adopted here, and should diminish asymptotically under a consistent estimator. That these limitations exist for our framework is in our view not fatal.
% Rather the approach is aligned with the emerging consensus that no single uncertainty measure adequately serves all inferential goals and settings \citep{mucsanyi_benchmarking_2024, hofman_uncertainty_2025}.

Our experimental results rely on access to an oracle density, which is unavailable outside of experimental settings. The question of how to evaluate UMs without access to such privileged information is somewhat fraught; see \citep{schweighofer2025}. Common options include Bayesian model averaging, maximum likelihood estimation, or alternative hybrids. Each method has its strengths and weaknesses, but we judge such debates to be orthogonal to our main focus, which is the introduction of a new UQ framework. Systematically examining the impact of such choices on measure performance is a promising direction for future work.

HDRs come with optimal false coverage guarantees via the Ghosh-Pratt lemma \citep{ghosh_relation_1961, pratt_shorter_1963}. A related result is that HDRs are intrinsically linked to acceptance regions for uniformly most powerful tests \citep{Lehmann2005}. Investigating the link between hypothesis testing and uncertainty quantification may provide more machinery for UQ.
Similarly, the connection to Lorenz dominance suggests that other tools from the inequality measurement literature---Atkinson indices, generalized entropies---may yield further insights for UQ.
% Some of the limitations outlined in our work reflect broader limitations arising from dominant assumptions in the literature, for example the assumption of broad access to predictive densities. Examining uncertainty estimates in limited density access settings would prove an insightful contribution, broadening the applicability of UQ.
Whilst there has been some work on volumetric measures of uncertainty (e.g \citep{sale_is_2023}), we hope that QUEST will spur further investigation into the geometric foundations of UQ.

\section{Conclusion}

We have introduced QUEST, a framework for uncertainty quantification grounded in the Lebesgue measure of highest density regions. By centering UQ on density concentration rather than pointwise predictive risk, QUEST avoids the pathologies of PSR-based measures. We established connections between our measures and classical statistics from information theory and economics, showing that QUEST sits naturally within a broader analytical tradition. Our axiomatic analysis demonstrates that QUEST measures of $AU$ and $EU$ are sound w.r.t. a standard set of UQ axioms, while our experiments show favourable performance against state-of-the-art alternatives in selective prediction. 
Future work will explore extensions to statistical inference, as well as generalizations that do not require second-order distributions.
% Whilst there are some limitation to the conception, theoretical and experimental arguments for this work we believe QUEST is a promising contribution to the suite of measures available and believe it poses interesting directions for future work.

% For NeurIPS camera ready this must be commented out, and checklist Ethics and AI safety sections used instead.
\section*{Ethics and AI Safety statement}
This work develops methods for uncertainty quantification (UQ), which can improve the reliability of AI systems and support safer decision-making by reducing overconfident or erroneous predictions. Such improvements are particularly valuable in domains where safety criteria are well-established (e.g. healthcare or engineering systems).

However, the concept of “safety” is not universally agreed upon, particularly in socio-technical domains such as social media, surveillance, or automated decision-making. In such settings, the same UQ techniques may be used to support applications that some stakeholders consider harmful—for example, by enabling more effective targeting, filtering, or behavioural influence.

As methodological researchers, we have limited control over downstream applications of these techniques. Nevertheless, we recognise that UQ methods are dual-use: while they can reduce harmful outcomes, they may also increase the effectiveness or credibility of systems deployed in ethically contested contexts.

This work meets the ethical requirements of King’s College London and is primarily motivated by applications in domains with widely accepted safety objectives. We emphasise, however, the importance of careful consideration when applying these methods, including transparency about uncertainty, appropriate domain-specific evaluation, and awareness of potential societal impacts.

We encourage practitioners and users of this work to critically assess both the benefits and risks of deploying UQ-enabled systems in their specific context.
 
\begin{ack}
SG is supported by UK Research and Innovation [grant number EP/S023356/1], in the UKRI Centre for Doctoral Training in Safe and Trusted Artificial Intelligence.\footnote{\url{www.safeandtrustedai.org}.}
DSW was supported by EPSRC grant number UKRI918.
We gratefully acknowledge the use of the CREATE HPC system for conducting our experiments \citep{KCL_CREATE_2022}.
\end{ack}

% \newpage
\small{
\bibliography{biblio}
}
%%%%%%%%%%%%%%%%%%%%%%%%%%%%%%%%%%%%%%%%%%%%%%%%%%%%%%%%%%%%

\newpage
\appendix

\section{Proofs}\label{sec:proofs}

In this section, we include proofs for all propositions and theorems.

\textbf{Proposition}. For any pdf $p$ on $\mathcal X \subseteq \mathbb{R}^d$  and any $\alpha \in (0, 1)$ such that $V_\alpha(p) < \infty$:
    \begin{align*}
        \log V_\alpha(p) = h(p_\alpha, u_\alpha),
    \end{align*}
    where $p_\alpha := p(\cdot \mid X \in C_\alpha)$ and $u_{\alpha}$ is the uniform density on  $C_\alpha$.

\begin{proof}
On $C_\alpha$ we have $u_\alpha(x) = 1/V_\alpha(p)$, hence $\log u_\alpha(x) = -\log V_\alpha(p)$ for every $x \in C_\alpha$. The conditional density $p_\alpha$ is supported on $C_\alpha$ and integrates to one, so
\begin{align*}
    h(p_\alpha, u_\alpha)
    &= -\int_{C_\alpha} p_\alpha(x) \log u_\alpha(x)\, dx\\
    &= \log V_\alpha(p) \cdot \int_{C_\alpha} p_\alpha(x)\, dx\\
    &= \log V_\alpha(p),
\end{align*}
which is the claimed identity.
\end{proof}

\textbf{Remark} (Decomposition into entropy and KL).
Using the standard relation $h(p, r) = h(p) + D_{\mathrm{KL}}(p \,\|\, r)$ for any density $q$ and reference $r$, the proposition rewrites as
\begin{align*}
    \log V_\alpha(p) = h(p_\alpha) + D_{\mathrm{KL}}(p_\alpha \,\|\, u_\alpha),
\end{align*}
exposing log HDR volume as the conditional entropy on the HDR plus a divergence term measuring deviation of $p_\alpha$ from uniform on $C_\alpha$.

\textbf{Proposition}. Let $p$ be a density on $[0, 1]^d$. Then we have the following two identities.
    \begin{itemize}
        \item[(a)] For every $\alpha \in (0, 1)$, the Lorenz $p$-curve satisfies $L_p\big(1 - V_\alpha(p)\big) = \alpha$.
        Equivalently, the curves $\alpha \mapsto V_\alpha(p)$ and $z \mapsto L_p(z)$ are related by the coordinate flip $(\alpha, V_\alpha) \mapsto (1 - V_\alpha, \alpha)$.
        \item[(b)] The Gini $p$-coefficient satisfies $G_p = 1 - 2 \cdot IV(p)$.
    \end{itemize}

\begin{proof}

The strategy is to identify the set $\{x : p(x) \leq q_p(1 - V_\alpha(p))\}$ as the complement of the HDR $C_\alpha(p)$, then evaluate the Lorenz curve at this point directly.

Let $z^* := 1 - V_\alpha(p)$. Since $V_\alpha(p) \in (0, 1)$ for any $\alpha \in (0, 1)$ on the unit cube, we have $z^* \in (0, 1)$.

\textbf{Step 1}: \textit{The quantile $q_p(z^*)$ equals the HDR threshold}. Define the \textit{survival function of the density} $\mu_p : [0, \infty) \to [0, 1]$ by
\begin{align*}
    \mu_p(t) := \lambda\big(\{x : p(x) > t\}\big),
\end{align*}
where $\lambda$ denotes Lebesgue measure. By definition,
\begin{align*}
    q_p(z^*) = \inf \big\{t \geq 0 : \lambda(\{x : p(x) \leq t\}) \geq z^*\big\}.
\end{align*}
The set $\{x : p(x) \leq t\}$ has Lebesgue measure $1 - \mu_p(t)$, since $\lambda(\mathcal{X}) = 1$. So
\begin{align*}
    q_p(z^*) = \inf\{t \geq 0 : 1 - \mu_p(t) \geq z^*\} = \inf\{t \geq 0 : \mu_p(t) \leq 1 - z^* = V_\alpha(p)\}.
\end{align*}

Recall that the HDR threshold at level $\alpha$ is defined as $t_\alpha(p) = \sup\big\{t \geq 0 : \int_{\{p \geq t\big\}} p\, d\lambda \geq 1 - \alpha\}$, and the HDR is $C_\alpha = \big\{x : p(x) \geq t_\alpha(p)\big\}$ with $V_\alpha(p) = \lambda(C_\alpha)$. Under the level-set condition $\lambda(\{p = t\}) = 0$ for all $t > 0$,\footnote{This condition ensures that HDRs are uniquely defined and that $V_\alpha$ is absolutely continuous in $\alpha$. This assumption is convenient but not necessary. For densities with plateaus, the proof extends by taking the minimal HDR at each point and replacing the change of variables argument in Step 3 below with a Lebesgue-Stieltjes formulation. The conclusion is unchanged.} we have $\lambda(C_\alpha) = \lambda(\{p > t_\alpha(p)\}) = \mu_p(t_\alpha(p))$, hence
\begin{align*}
    \mu_p(t_\alpha(p)) = V_\alpha(p).
\end{align*}
Combined with the fact that $\mu_p$ is strictly decreasing on the range of HDR thresholds under our stated regularity condition, this gives $\inf\{t \geq 0 : \mu_p(t) \leq V_\alpha(p)\} = t_\alpha(p)$, and therefore $q_p(z^*) = t_\alpha(p)$.

\textbf{Step 2}: \textit{The $p$-Lorenz curve evaluation}. By definition,
\begin{align*}
    L_p(z^*) = \int_{\{x : p(x) \leq q_p(z^*)\}} p(x)\, dx = \int_{\{x : p(x) \leq t_\alpha(p)\}} p(x)\, dx.
\end{align*}
Under the level-set condition, the sets $\{p \leq t_\alpha\}$ and $\{p < t_\alpha\}$ differ by a Lebesgue null set, and the latter is the complement of the HDR $C_\alpha = \{p \geq t_\alpha\}$. So
\begin{align*}
    L_p(z^*) = \int_{C_\alpha^c} p(x)\, dx = 1 - \int_{C_\alpha} p(x)\, dx = 1 - (1 - \alpha) = \alpha.
\end{align*}
Substituting $z^* = 1 - V_\alpha(p)$ gives the claimed identity of item (a).

\textbf{Step 3}: \textit{Evaluating the $p$-Gini coefficient.} By (a), the Lorenz curve $L_p$ and the coverage curve are related by the coordinate flip $L_p(1 - V_\alpha(p)) = \alpha$. We compute the Gini coefficient by a change of variables.

By definition,
\begin{align*}
    G_p = 1 - 2\int_0^1 L_p(z)\, dz.
\end{align*}
Substitute $z = 1 - V_\alpha(p)$, so $dz = -V'_\alpha(p)\, d\alpha$, where $V'_\alpha$ denotes the derivative of $V_\alpha$ with respect to $\alpha$. By (a), $L_p(z) = \alpha$ at this substitution. The integration limits transform: as $\alpha$ ranges over $(0, 1)$, $z = 1 - V_\alpha(p)$ ranges from $1 - V_0(p) = 0$ at $\alpha = 0$ (since $V_0(p) = 1$ on the unit cube) to $1 - V_1(p) = 1$ at $\alpha = 1$ (since $V_1(p) = 0$). Note that $V_\alpha$ is non-increasing in $\alpha$, so $V'_\alpha \leq 0$ and $-V'_\alpha \geq 0$.

The integral becomes
\begin{align*}
    \int_0^1 L_p(z)\, dz = \int_0^1 \alpha \cdot (-V'_\alpha(p))\, d\alpha.
\end{align*}
Integrate by parts with $f(\alpha) = \alpha$ and $g'(\alpha) = -V'_\alpha(p)$, hence $f'(\alpha) = 1$ and $g(\alpha) = -V_\alpha(p)$:
\begin{align*}
    \int_0^1 \alpha \cdot (-V'_\alpha(p))\, d\alpha = \big[\alpha \cdot (-V_\alpha(p))\big]_0^1 + \int_0^1 V_\alpha(p)\, d\alpha = (-V_1(p)) - 0 + IV(p) = IV(p),
\end{align*}
using $V_1(p) = 0$.

Therefore,
\begin{align*}
    G_p = 1 - 2 \cdot IV(p),
\end{align*}
which concludes the proof.

\end{proof}

\textbf{Proposition.}
Let $p, p'$ be pdfs on $\mathbb{R}^d$ with supports $\mathcal{X}, \mathcal{X}'$. For thresholds $0 < \tau_\ell < \tau_u < \infty$, define
\begin{align*}
    \mathcal{X}_\tau := \{x : \tau_\ell < p(x) \leq \tau_u\}, \quad \mathcal{X}'_\tau := \{x : \tau_\ell < p'(x) \leq \tau_u\}, \quad \mathcal{Y}_\tau := \mathcal{X}_\tau \cup \mathcal{X}'_\tau,
\end{align*}
and let $u_\tau$ denote the uniform density on $\mathcal{Y}_\tau$. Write $p_\tau, p'_\tau$ for the conditional densities of $p, p'$ on their respective level-set windows. Then $p \sim p'$ if and only if, for every threshold pair $(\tau_\ell, \tau_u)$ and every $f$-divergence $D_f$:
\begin{align*}
    D_f(p_\tau ~||~ u_\tau) = D_f(p'_\tau ~||~ u_\tau).
\end{align*}

\begin{proof}
For thresholds $(\tau_\ell, \tau_u)$ where $V_\tau := \lambda(\mathcal{X}_\tau)$ and $V'_\tau := \lambda(\mathcal{X}'_\tau)$ are both positive, write $W_\tau := \lambda(\mathcal{Y}_\tau)$, $m_\tau := \int_{\mathcal{X}_\tau} p\, d\lambda$, $m'_\tau := \int_{\mathcal{X}'_\tau} p'\, d\lambda$. Let $\mu_p(t) := \lambda(\{p > t\})$ denote the level-set volume function, so spread-equivalence asserts $\mu_p \equiv \mu_{p'}$.

\textbf{Step 1: Divergence decomposition.}
Since $p_\tau$ vanishes outside $\mathcal{X}_\tau$ but $u_\tau = W_\tau^{-1}$ is positive throughout $\mathcal{Y}_\tau$:
\begin{align*}
    D_f(p_\tau \| u_\tau)
    = \frac{1}{W_\tau}\int_{\mathcal{X}_\tau} f\!\left(\frac{W_\tau\, p(x)}{m_\tau}\right) dx + \frac{f(0)\,(W_\tau - V_\tau)}{W_\tau}.
\end{align*}
Multiplying through by $W_\tau$, the divergence equality is equivalent to
\begin{align}\label{eq:dagger}
    A_f(p, \tau) - f(0) V_\tau = A_f(p', \tau) - f(0) V'_\tau,
\end{align}
where
\begin{align*}
    A_f(p, \tau) := \int_{\mathcal{X}_\tau} f\!\left(\frac{W_\tau\, p(x)}{m_\tau}\right) dx.
\end{align*}

\textbf{Step 2: Forward direction.}
Suppose $\mu_p \equiv \mu_{p'}$. Then for any window:
\begin{align*}
    V_\tau = \mu_p(\tau_\ell) - \mu_p(\tau_u) = V'_\tau,
\end{align*}
so the $f(0) V_\tau$ terms in Eq. \ref{eq:dagger} match. By layer-cake, using $\lambda(\{p > s\} \cap \mathcal{X}_\tau) = V_\tau$ for $s < \tau_\ell$, $= \mu_p(s) - \mu_p(\tau_u)$ for $s \in [\tau_\ell, \tau_u]$, and $= 0$ for $s \geq \tau_u$:
\begin{align*}
    m_\tau = \tau_\ell V_\tau + \int_{\tau_\ell}^{\tau_u} \mu_p(s)\, ds - (\tau_u - \tau_\ell)\mu_p(\tau_u),
\end{align*}
which depends only on $\mu_p$ on $[\tau_\ell, \tau_u]$ and at the endpoints. Spread-equivalence therefore gives $m_\tau = m'_\tau$.

The conditional CDF of $p(X)$ given $X \in \mathcal{X}_\tau$ (with $X$ uniform on $\mathcal{X}_\tau$) at level $s \in [\tau_\ell, \tau_u]$ equals 
\begin{align*}
    \frac{\mu_p(s) - \mu_p(\tau_u)}{V_\tau},
\end{align*}
which is determined by $\mu_p$ alone. Spread-equivalence gives equality of these CDFs across $p$ and $p'$. Combined with $W_\tau / m_\tau = W_\tau / m'_\tau$ being a deterministic monotone rescaling, the two random variables $W_\tau p(X)/m_\tau \,|\, X \in \mathcal{X}_\tau$ and $W_\tau p'(X')/m'_\tau \,|\, X' \in \mathcal{X}'_\tau$ are equal in distribution. Hence
\begin{align*}
    A_f(p, \tau) = V_\tau \, \mathbb{E}\!\left[f\!\left(\tfrac{W_\tau p(X)}{m_\tau}\right) \,\bigg|\, X \in \mathcal{X}_\tau\right] = V'_\tau \, \mathbb{E}\!\left[f\!\left(\tfrac{W_\tau p'(X')}{m'_\tau}\right) \,\bigg|\, X' \in \mathcal{X}'_\tau\right] = A_f(p', \tau),
\end{align*}
which together with $V_\tau = V'_\tau$ gives Eq. \ref{eq:dagger}.

\textbf{Step 3: Reverse direction.}
Suppose that $D_f(p_\tau ~||~ u_\tau) = D_f(p'_\tau ~||~ u_\tau)$ for every $f$-divergence and threshold pair $\tau = (\tau_\ell, \tau_u)$.
Define 
\begin{align*}
    Y := \frac{W_\tau p(X)}{m_\tau}, \quad Y' := \frac{W_\tau p(X)}{m'_\tau}.
\end{align*}
Since $\tau_\ell > 0, \tau_u < \infty$, and $m_\tau, m'_\tau \in (0, \infty)$, both $Y$ and $Y'$ are supported on compact subsets of $(0, \infty)$. Let $\nu, \nu'$ denote their respective laws.

Rearranging the divergence decomposition of Step 1 gives
\begin{align*}
    W_\tau D_f(p_\tau ~||~ u_\tau) = V_\tau \mathbb E\big[f(Y)\big] + f(0)(W_\tau - V_\tau),
\end{align*}
and analogously for $p'_\tau$. Since the divergences are equal by assumption, we have
\begin{align*}
	V_\tau \mathbb E\big[f(Y)\big] + f(0)(W_\tau-V_\tau) = V'_\tau \mathbb E\big[f(Y')\big] + f(0)(W_\tau-V'_\tau).
\end{align*}
Rearranging gives
\begin{align*}
	V_\tau {\mathbb E\big[f(Y)\big]-f(0)} = V'_\tau {\mathbb E\big[f(Y')\big]-f(0)}.
\end{align*}

Consider the family of power divergences $f_k(t)=t^k-1$ for integers $k\ge 2$. These functions are convex and satisfy $f_k(1)=0$. Since $f_k(0)=-1$, the preceding equation becomes
\begin{align*}
	V_\tau \mathbb E\big[Y^k\big] = V'_\tau \mathbb E\big[Y'^k\big], \qquad k\ge 2.
\end{align*}
This is the key moment identity. Let $K\subset(0,\infty)$ be a compact interval containing the supports of both $\nu$ and $\nu'$. Define a finite signed measure $\sigma$ on $K$ by
\begin{align*}
	d\sigma(y) := y^2 \big(V_\tau,d\nu(y)-V'_\tau,d\nu'(y) \big).
\end{align*}
For every integer $j\ge 0$, the moment identity with $k=j+2$ gives
\begin{align*}
	\int_K y^j,d\sigma(y) = V_\tau\mathbb E[Y^{j+2}] - V'_\tau\mathbb E[Y'^{j+2}] = 0.
\end{align*}
Thus every polynomial has zero integral with respect to $\sigma$. Since polynomials are dense in $C(K)$ by the Weierstrass approximation theorem, it follows that
\begin{align*}
	\int_K g(y),d\sigma(y)=0
\end{align*}
for every continuous function $g\in C(K)$. Hence $\sigma=0$ as a finite signed measure on $K$.

Because $K\subset(0,\infty)$, the function $y\mapsto y^{-2}$ is bounded and continuous on $K$. 
Multiplying by $y^{-2}$, we obtain 
\begin{align*}
    V_\tau\,d\nu = V'_\tau\,d\nu'.
\end{align*}
Taking total masses on both sides yields 
\begin{align*}
    V_\tau=V'_\tau,
\end{align*}
since $\nu$ and $\nu'$ are probability measures. 
Therefore, for every admissible threshold pair $(\tau_\ell,\tau_u), \lambda(\mathcal X_\tau)=\lambda(\mathcal X'_\tau)$.  
Equivalently, we have
\begin{align*}
	\lambda\{x:\tau_\ell < p(x)\le \tau_u\}=\lambda\{x:\tau_\ell < p'(x)\le \tau_u\}.
\end{align*}
In terms of the superlevel-volume functions
\begin{align*}	
	\mu_p(t):=\lambda\{x:p(x)>t\}, \qquad \mu_{p'}(t):=\lambda\{x:p'(x)>t\}.
\end{align*}
This says that
\begin{align*}
	\mu_p(\tau_\ell)-\mu_p(\tau_u)=\mu_{p'}(\tau_\ell)-\mu_{p'}(\tau_u)
\end{align*}
for all $0<\tau_\ell<\tau_u<\infty$, up to the usual endpoint convention for level sets. 
Hence the difference 
\begin{align*}
    \Delta(t):=\mu_p(t)-\mu_{p'}(t)
\end{align*}
is constant on $(0,\infty)$. 
Since $p$ and $p'$ are probability densities, Markov’s inequality gives
\begin{align*}
	\mu_p(t)\le \frac{1}{t}, \qquad \mu_{p'}(t)\le \frac{1}{t},
\end{align*}
and therefore both $\mu_p(t)$ and $\mu_{p'}(t)$ converge to $0$ as $t\to\infty$. The constant value of $\Delta(t)$ must therefore be $0$. Thus
\begin{align*}
	\mu_p(t)=\mu_{p'}(t) \qquad \text{for all }t>0,
\end{align*}
which is precisely spread-equivalence, $p\sim p'$.
\end{proof}

\textbf{Theorem} (Soundness). QUEST measures of $AU$ and $EU$ satisfy axioms $\mathrm{A}0$-$\mathrm{A}5$ for all $\alpha \in (0,1)$, with weak inequality in the case of $\mathrm{A}4$.
Global $EU$ satisfies $\mathrm{A}4$ (strict) iff $q \prec q'$.
$TU$ violates $\mathrm{A}4$ (weak and strict).

\begin{proof}

We begin by deriving a crucial lemma, which will be repeatedly invoked below.

\textbf{Lemma} (Partial ordering). Suppose that $p \preceq p'$. Then (a) for any $\alpha \in (0, 1), V_\alpha(p) \leq V_\alpha(p')$; and (b) $IV(p) \leq IV(p')$, with equality in both cases iff $p \sim p'$.

\begin{proof}
Our strategy is to show that levelling implies an ordering on cumulative density profiles, from which the HDR-volume inequalities follow directly.

For a density $p$ on $\mathcal X \subseteq \mathbb R^d$, define the survival function of the density $\mu_p : [0,\infty) \to [0,\infty]$ by
\begin{align*}
	\mu_p(t) := \lambda\big(\{x\in \mathcal X : p(x)>t\}\big),
\end{align*}
where $\lambda$ denotes Lebesgue measure. Markov's inequality gives $\mu_p(t)\leq 1/t$ for $t>0$, so $\mu_p$ is finite on $(0,\infty)$. The decreasing rearrangement of $p$ is
\begin{align*}
	p^\downarrow(r) := \inf \{t\geq 0 : \mu_p(t)\leq r\}, \qquad r\geq 0.
\end{align*}
This function is non-increasing, right-continuous, and equimeasurable with $p$, meaning
\begin{align*}
	\lambda\big(\{x:p(x)>t\}\big) = \lambda_1\big(\{r:p^\downarrow(r)>t\}\big)
\end{align*}
for all $t\geq 0$, where $\lambda_1$ denotes one-dimensional Lebesgue measure. Define the cumulative profile
\begin{align*}
	\mathcal P(r) := \int_0^r p^\downarrow(s),ds, \qquad r\geq 0.
\end{align*}
The function $\mathcal P$ is non-decreasing, continuous, and bounded above by $1$. Define $\mathcal P'$ analogously from $p'$.

\textbf{Step 1: HDR volume in terms of the cumulative profile.}
Under the level-set assumption $\lambda(\{p=t\})=0$ for all $t>0$, the HDR threshold
\begin{align*}
	t_\alpha(p) := \sup~ \left\{t:\int_{{p\geq t}}p ~d\lambda\geq 1-\alpha\right\}
\end{align*}
satisfies
\begin{align*}
	\int_{C_\alpha(p)}p ~d\lambda = \int_0^{V_\alpha(p)}p^\downarrow(s) ~ds = \mathcal P\big(V_\alpha(p)\big) = 1-\alpha.
\end{align*}
The first equality follows from equimeasurability of $p$ and $p^\downarrow$, since the HDR corresponds to the upper level set of $p$, or equivalently to the initial segment of its decreasing rearrangement. Thus, with the usual generalized inverse convention,
\begin{align*}
	V_\alpha(p) = \mathcal P^{-1}(1-\alpha) := \inf \{r\geq 0:\mathcal P(r)\geq 1-\alpha \}.
\end{align*}
The same representation holds for $p'$:
\begin{align*}
	V_\alpha(p') = \mathcal P'^{-1}(1-\alpha).
\end{align*}
The level-set assumption is convenient but not essential. If $p$ has plateaus, we take $V_\alpha(p)$ to be the minimal HDR volume and interpret the preceding display using the generalized inverse; the same argument then goes through by right-continuity.

\textbf{Step 2: Levelling implies majorization.}
We first consider the case where $p\preceq p'$ via $\mathrm{L}1$. Let $A$ and $B$ be the donor and recipient regions in Defn.~\ref{def:levelling}. Define
\begin{align*}
	a(x)&:=p(x)-p'(x)\geq 0, \qquad x\in A\\
    b(x)&:=p'(x)-p(x)\geq 0, \qquad x\in B.
\end{align*}
By $\mathrm{L}1$(a), mass is conserved:
\begin{align*}
	\int_A a(x),d\lambda(x) = \int_B b(x),d\lambda(x).
\end{align*}
By $\mathrm{L}1$(b), $p$ and $p'$ agree outside $A\cup B$. Thus, to compare the spread of $p$ and $p'$, it suffices to compare the loss on $A$ with the gain on $B$.

We use the standard positive-part characterization of majorization \citep{marshall2011inequalities}: for non-negative integrable functions with equal total integral, $p^\downarrow$ majorizes $p'^\downarrow$ iff
\begin{align*}
	\int (p-c)_+ ~d\lambda \geq \int (p'-c)_+ ~d\lambda \qquad \forall c\geq 0.
\end{align*}
Fix $c\geq 0$. The loss in the positive-part functional caused by lowering the density on $A$ is
\begin{align*}
	L_c := \int_A \big\{(p(x)-c)_+-(p'(x)-c)_+\big\} ~d\lambda(x).
\end{align*}
Using $(u-c)_+-(v-c)_+=\int_v^u \mathbf 1\{s>c\} ~ds$ for $u\geq v$, this can be written as
\begin{align*}
	L_c = \int_A\int_{p'(x)}^{p(x)} \mathbf 1\{s>c\} ~ds ~d\lambda(x).
\end{align*}
Similarly, the gain on $B$ is
\begin{align*}
	G_c &:= \int_B \big\{(p'(x)-c)_+-(p(x)-c)_+\big\} ~d\lambda(x) \\
    &= \int_B\int_{p(x)}^{p'(x)} \mathbf 1\{s>c\} ~ds~d\lambda(x).
\end{align*}
Recall that $\mathrm{L}1$(d) prevents level crossing. If $x\in A$, then every removed layer has height $s\in[p'(x),p(x)]$. If $y\in B$, then every added layer has height $r\in[p(y),p'(y)]$. By $\mathrm{L}1$(d),
\begin{align*}
	\inf_A p \geq \sup_B p \qquad\text{and}\qquad \inf_A p' \geq \sup_B p',
\end{align*}
so every removed layer lies weakly above every added layer:
\begin{align*}
	s \geq p'(x) \geq \inf_A p' \geq \sup_B p' \geq p'(y) \geq r.
\end{align*}
Moreover, the total volume of removed layers equals the total volume of added layers, since
\begin{align*}
    \int_A\int_{p'(x)}^{p(x)}~ds ~d\lambda(x) &= \int_A a(x) ~d\lambda(x) = \int_B b(x) ~d\lambda(x) = \int_B\int_{p(x)}^{p'(x)} ~ds ~d\lambda(x).
\end{align*}
Because the removed layer-volume lies entirely at heights weakly above the added layer-volume, thresholding at any level $c$ can count no more added layer-volume than removed layer-volume. Hence
\begin{align*}
	G_c\leq L_c \qquad \forall c\geq 0.
\end{align*}
It follows that
\begin{align*}
	\int (p-c)_+ ~d\lambda - \int (p'-c)_+ ~d\lambda &= L_c-G_c \geq 0.
\end{align*}
Therefore
\begin{align*}
	\int (p-c)_+,d\lambda \geq \int (p'-c)_+,d\lambda \qquad \forall c\geq 0.
\end{align*}
By the positive-part characterization, $p^\downarrow$ majorizes $p'^\downarrow$. Equivalently,
\begin{align*}
	\mathcal P(r) = \int_0^r p^\downarrow(s) ~ds \geq \int_0^r p'^\downarrow(s) ~ds = \mathcal P'(r) \qquad \forall r\geq 0.
\end{align*}

Now suppose $p\preceq p'$ via $\mathrm{L}2$. Then there exists a density $p''$ such that $p''\sim p$ and $p''\preceq p'$ via $\mathrm{L}1$. Since spread-equivalent densities have the same decreasing rearrangement, $p''^\downarrow=p^\downarrow$ and hence $\mathcal P^{''}=\mathcal P$. Applying the preceding $\mathrm{L}1$ argument to $p''$ and $p'$ gives
\begin{align*}
	\mathcal P(r)=\mathcal P^{''}(r)\geq \mathcal P'(r) \qquad \forall r\geq 0.
\end{align*}
Thus $p\preceq p'$ implies $\mathcal P\geq \mathcal P'$ pointwise.

\textbf{Step 3: Cumulative profile ordering implies HDR-volume ordering.}
Fix $\alpha\in(0,1)$. Since $\mathcal P(r)\geq \mathcal P'(r)$ for every $r\geq 0$, the profile $\mathcal P$ reaches the level $1-\alpha$ no later than $\mathcal P'$. Using the generalized inverse representation from Step 1,
\begin{align*}
	V_\alpha(p) = \inf \{r:\mathcal P(r)\geq 1-\alpha\} \leq \inf \{r:\mathcal P'(r)\geq 1-\alpha\} = V_\alpha(p').
\end{align*}
This proves (a).

If $p\sim p'$, then $p^\downarrow=p'^\downarrow$, hence $\mathcal P=\mathcal P'$ and $V_\alpha(p)=V_\alpha(p')$ for every $\alpha$. Conversely, if $V_\alpha(p)=V_\alpha(p')$ for every $\alpha\in(0,1)$, then the generalized inverse profiles agree on $(0,1)$, so $\mathcal P=\mathcal P'$ on the relevant range. Hence $p^\downarrow=p'^\downarrow$ almost everywhere, which is equivalent to $p\sim p'$.

\textbf{Step 4: Integrated volume.}
Integrating the pointwise inequality from Step 3 gives
\begin{align*}
	IV(p) = \int_0^1 V_\alpha(p) ~d\alpha \leq \int_0^1 V_\alpha(p') ~d\alpha = IV(p').
\end{align*}
This proves (b).

If $p\sim p'$, then $V_\alpha(p)=V_\alpha(p')$ for every $\alpha$, so $IV(p)=IV(p')$. Conversely, suppose $IV(p)=IV(p')$. Since $V_\alpha(p)\leq V_\alpha(p')$ for every $\alpha$, the non-negative function $V_\alpha(p')-V_\alpha(p)$ integrates to zero over $(0,1)$. Hence $V_\alpha(p)=V_\alpha(p')$ for almost every $\alpha$. By right-continuity of $\alpha\mapsto V_\alpha(p)$ and $\alpha\mapsto V_\alpha(p')$, equality extends to every $\alpha\in(0,1)$. By the equality statement in Step 3, $p\sim p'$.
\end{proof}

\begin{table}[t]
    \centering
    \caption{QUEST measures of uncertainty. (Reproduced from \cref{tab:quest_measures} for convenience)}
    \begin{tabular}{lccc}
        \toprule
        & \textbf{Aleatoric} & \textbf{Epistemic} & \textbf{Total} \\
        \midrule
        \textbf{Local}  & $V_\alpha(p_{\theta^*})$ & $V_\alpha(q)$ & $V_\alpha(p_{\theta^*}) \cdot g_f \big( D_f(p^\downarrow_\alpha ~||~ \hat p_\alpha) \big)$ \\
        \addlinespace % Adds a small vertical gap for readability
        \textbf{Global} & $IV(p_{\theta^*})$ & $IV(q)$ & $\int_0^1 V_\alpha(p_{\theta^*}) \cdot g_f \big( D_f(p^\downarrow_\alpha ~||~ \hat p_\alpha) \big) ~d \alpha$ \\
        \bottomrule
    \end{tabular}
\end{table}

With this lemma established, we return to our axiomatic assessment. 
We reprint our table of QUEST measures above for reference.

\textit{$\mathrm{A}0$: $TU, AU$ and $EU$ should be non-negative.}
Non-negativity is immediate from the QUEST definitions, as volume and divergence are always bounded below by $0$.

\textit{$\mathrm{A}1$: For all $Q \in \mathcal Q(\Theta), EU(q) \geq EU(\delta_\theta) = 0$.}
A Dirac mass has zero volume, and so $EU(\delta_\theta)=0$ on both local and global measures. The inequality follows from the non-negativity of $EU$ in general ($\mathrm{A}0$).

\textit{$\mathrm{A}2$: If $q_\ell, q, q_u$ satisfy $q_\ell \preceq q \preceq q_u$, then $EU(q_\ell) \leq EU(q) \leq EU(q_u)$.}
This follows from the partial ordering lemma, with $EU$ measured by $V_\alpha(q)$ (local) and $IV(q)$ (global).

\textit{$\mathrm{A}3$: If $p_{\ell}, p, p_{u}$ satisfy $p_{\ell} \preceq p \preceq p_{u}$, then $AU(p_{\ell}) \leq AU(p) \leq AU(p_{u})$.}
This follows from the partial ordering lemma, with $AU$ measured by $V_\alpha(p_{\theta^*})$ (local) and $IV(p_{\theta^*})$ (global).

\textit{$\mathrm{A}4$: If $Q'$ is a mean-preserving spread of $Q$, then $EU(q) \leq EU(q')$ (weak) or $EU(q) < EU(q')$ (strict); the same holds for $TU$.}
For global $EU$, both inequalities follow immediately from the partial ordering lemma and Defn \ref{def:spread}, with strict inequality iff $q \prec q'$. For local $EU$, only the weak inequality is satisfied, since levelling may occur strictly below the $(1 - \alpha)$-HDR threshold and therefore imply no reduction in $V_\alpha(q)$.
For $TU$, we construct a simple counterexample. Suppose that $p_{\theta^*}$ is standard normal, with $q$ consisting of a Dirac mass at $\hat \mu = 1, \hat \sigma = 0$. In this case, $TU$ is infinite, as true parameter values are assigned zero density by the second-order distribution. A mean-preserving spread of $q$ that expands this distribution to include $\mu^*=0, \sigma^*=1$ in its $(1 - \alpha)$-HDR will reduce the $f$-divergence between $p^*_\alpha$ and $\hat p_\alpha$, resulting in lower $TU$. More generally, any levelling of $q$ that improves calibration at level $\alpha$ will reduce the inflation factor $g_f\big(D_f (p^*_\alpha ~||~ \hat p_\alpha) \big)$.

\textit{$\mathrm{A}5$: If $Q'$ is a spread-preserving location shift of $Q$, then $EU(q) = EU(q')$.}
This follows from the partial ordering lemma and the antisymmetry property of partial orders. 

\end{proof}

\section{Additional Experimental Details}
\label{sec:appendix_experiments}

This appendix provides extended details for the selective prediction evaluation.

Selective prediction evaluates whether uncertainty measures \emph{rank} predictions by reliability.
Given uncertainty scores $u(x)$ on a fixed test set, we sort test points by $u(x)$ and retain a subset
$S_r$ at retention proportion $r \in [0,1]$ (i.e., the least uncertain $r$ fraction). Loss at retention $r$ is
\begin{equation}
  \mathcal{L}(r) \;=\; \frac{1}{|S_r|} \sum_{(x,y)\in S_r} \ell(\hat{y}(x),y),
\end{equation}
for a suitable loss $\ell(\cdot,\cdot)$, where $\hat{y}(x)$ is the prediction of the model for $x$.
We report loss--retention curves (see below for definition of loss).% and the area under the risk--coverage curve (AURC).
Each experimental configuration is repeated 20 times. Seeds are set such that the results are reproducible.

\subsection{Data}
\label{sec:appendix_data}
We construct three datasets targeting distinct behaviours of uncertainty measures. $1500$ data points are sampled, and split randomly such that $n_{train}=1000$ and $n_{test}=500$. For each MoE model, the training data is bootstrap sampled to increase ensemble diversity. The resulting training sample is of size $n_{train}$. The equations governing the data generating processes are outlined below

\paragraph{Unimodal Gaussian}
We generate unimodal Gaussian conditionals to test that QUEST does not degrade performance in settings where variance and differential entropy
are appropriate proxies for uncertainty in a density concentration/modal perspective. 
\textbf{Expected outcome:} variance, entropy, and QUEST exhibit similar ranking quality and downstream performance.
\begin{gather*}
Y \mid x \sim \mathcal{N}(\mu(x), \sigma_G^2(x)) \\
\sigma_G(x) = \sqrt{s^2 + \delta^2(x)}, \quad s = 0.2, \quad \delta(x) = 0.5 + |x| \\
\mu(x) = \sin(\pi x), \quad X \sim \text{Uniform}[-2, 2]
\end{gather*}

\paragraph{Unimodal skewed}
We generate unimodal conditionals with marked skewness. The aim is to create regions where dispersion-based measures are dominated by tail mass rather than local density concentration.
\textbf{Expected outcome:} variance and differential entropy can mis-rank uncertainty due to tail sensitivity,
whereas QUEST improves ranking by focusing on concentration near dominant high-density regions.
\begin{gather*}
Y \mid x = \mu(x) + \sigma(x) \cdot Z, \quad Z \sim \text{SkewNormal}(\alpha(x))\\
\sigma(x) = 0.3 + 0.2|x|, \quad \alpha(x) = 5 \cdot \text{sign}(x) \sqrt{|x|}\\
\mu(x) = \sin(\pi x), \quad X \sim \text{Uniform}[-2, 2]
\end{gather*}
note $Z$ is a standardised SkewNormal distribution, with mean 0 and variance 1.
\paragraph{Multimodal mixture}
We generate input-dependent two-component Gaussian mixtures:
% \begin{equation}
%   y \mid x \sim \sum_{k=1}^{2} \pi_k(x)\,\mathcal{N}\!\big(\mu_k(x), \sigma_k^2\big),
% \end{equation}
% where $\pi_k(x)$ and $\mu_k(x)$ vary with $x$ to create regions of mode separation, overlap, and dominance.
Because the density is known, conditional modes and ground-truth uncertainty-related quantities can be computed
exactly (or to arbitrary numerical precision).
\textbf{Expected outcome:} variance and differential entropy can mis-rank uncertainty due to bi-modal nature of the data erroneously inflating these measures, whereas QUEST improves ranking by correctly focusing on concentration near local modes.
\begin{gather*}
Y \mid x \sim \frac{1}{2}\mathcal{N}(\mu(x) - \delta(x), s^2) + \frac{1}{2}\mathcal{N}(\mu(x) + \delta(x), s^2)\\
\quad s = 0.2, \quad \delta(x) = 0.5 + |x| \\
\mu(x) = \sin(\pi x), \quad X \sim \text{Uniform}[-2, 2]
\end{gather*}

\subsection{Ensemble of mixture of experts}
\label{app:moe-description}
For our model we utilise an ensemble of $M=10$ independently trained mixture of expert models. The predictive distribution is given by:
\begin{equation*}
p(y \mid x) = \frac{1}{M} \sum_{m=1}^{M} p^{(m)}(y \mid x),
\end{equation*}
where each member $m$ is a mixture of $K=3$ unimodal Gaussian experts:
\begin{equation*}
p^{(m)}(y \mid x) = \sum_{k=1}^{K} g_k^{(m)}(x)\,
\mathcal{N}\big(y \mid \mu_k^{(m)}(x), (\sigma_k^{(m)}(x))^2\big).
\end{equation*}

Here $g_k^{(m)}(x)$ are softmax gating weights:
\begin{equation*}
g_k^{(m)}(x) = \frac{\exp(a_k^{(m)}(x))}{\sum_{j=1}^{K} \exp(a_j^{(m)}(x))}.
\end{equation*}

$a_k^{(m)}(x)$ denotes the unnormalized gating score for expert $k$ in ensemble member $m$, and the corresponding mixture weight is obtained by a softmax transformation,
where the score is defined as:
\begin{equation*}
a_k^{(m)}(x) = \log \alpha_k^{(m)} + \log \mathcal{N}\big(y \mid \mu_k^{(m)}(x), (\sigma_k^{(m)}(x))^2\big).
\end{equation*}
where $\alpha_k^{(m)} \ge 0$ are mixture weights of the experts, learnt via the EM algorithm with k-means initialisation. Note
\begin{equation*}
\sum_{k=1}^{K} \alpha_k^{(m)} = 1.
\end{equation*}

Each model $m$ is trained independently via the EM agorithm, maximising log loss over the parameter space:
\begin{equation*}
\mathcal{L}^{(m)} = \sum_{n=1}^{N} \log p^{(m)}(y_n \mid x_n).
\end{equation*}

We utilise the MoE implementation from the package \textit{cgmm} (see \cref{subsec:appx-python}), and create an ensemble of these within our code.
Each \textit{cgmm} MoE is trained with hyper-parameters outlined in \cref{tab:cgmm_moe_hyperparams_full}.

\begin{table}[t]
\centering
\caption{Mixture-of-Experts hyperparameters for the cgmm-based MoE implementation. Bold entries indicate values that are differ from \textit{cgmm} defaults.}
\begin{tabular}{lll}
\hline
Parameter & Value & Explanation \\
\hline
\texttt{n\_components} & 3 & Number of experts per MoE member \\
\texttt{covariance\_type} & \texttt{full} & Covariance structure for each expert \\
\texttt{shared\_covariance} & \texttt{False} & Whether experts share a covariance matrix \\
\texttt{mean\_function} & \texttt{affine} & Expert mean-function class \\
\texttt{reg\_covar} & $\mathbf{10^{-4}}$ & Covariance regularization for numerical stability \\
\texttt{gating\_penalty} & $10^{-2}$ & L2 penalty on gating weights \\
\texttt{gating\_max\_iter} & 50 & Maximum optimization iterations for gating updates \\
\texttt{gating\_penalty\_bias} & \texttt{None} & Optional bias penalty term \\
\texttt{gating\_tol} & $10^{-6}$ & Gating convergence tolerance \\
\texttt{gating\_init\_scale} & 0.1& Initial scale for gating parameters \\
\texttt{max\_iter} & 200 & Maximum EM iterations \\
\texttt{tol} & $10^{-4}$ & EM convergence tolerance \\
\texttt{n\_init} & 1 & Number of random initializations \\
\texttt{init\_params} & \texttt{kmeans} &Initialization method \\
\hline
\end{tabular}
\label{tab:cgmm_moe_hyperparams_full}
\end{table}

\subsection{Uncertainty Measures}
Computation of uncertainty measures utilises empirical density samples, with grid based approximations when no closed form analytic solution exists (e.g. for obtaining highest density regions).
We state the measures as utilised in our computations in \cref{tbl:appx-UMs}.
Note we use the Oracle perspective in the selective prediction experiment. Entries with $p^*$ do not require this to be approximated as would usually happen in settings without access to the true density.
\begin{table}[ht]
\centering
\caption{Uncertainty measures as computed for our experiments}
\begin{tabular}{LLLL}
    \hline
     & \textbf{TU} & \textbf{AU} &\textbf{EU} \\ \hline
    \textbf{Variance} & \text{AU+EU} & \mathbb{V}ar(p^*) &(\mathbb{E}[p^*]-\mathbb{E}[\hat{p}])^2 \\
    \textbf{Differential Entropy} & \text{AU+EU} & h(p^*) &KL(p^*||\hat{p}) \\
    \textbf{QUEST-local}& V_\alpha(p^*) / \big(1 - d_{TV}(p^*_\alpha, \hat p_\alpha)\big) & V_\alpha(p^*) & V_\alpha({q})  \\
    \textbf{QUEST-global} & \int_{\alpha\in[0,1]}V_\alpha(p^*) / \big(1 - d_{TV}(p^*_\alpha, \hat p_\alpha)\big) d\alpha & \int_{\alpha}V_\alpha(p*) d\alpha &\int_{\alpha}V_\alpha({q}) d\alpha    \\ 
    \hline
\end{tabular}
\label{tbl:appx-UMs}
\end{table}
For the calculation of $\alpha$ volume and integrated volume we require the computation of the highest density region of densities $p\in P$ or $q \in \mathcal{Q}$, 1st and 2nd order distributions respectively.
We compute the HDR via a grid approximation\footnote{Note the grid approximation does not suffer a curse of dimensionality issue as we always have 1D first order conditional distributions, and our models are such that we have a 2D second order conditional distribution (i.e. we have two model parameters). For models providing more dimensions techniques such as Monte Carlo sampling make the procedure feasible, but this is outside of the scope of this paper.}, the pseudo-code for which is outlined in \cref{alg:hdr_grid}. 

\begin{algorithm}[ht]
\caption{Grid-based approximation of Highest Density Region (HDR) at level $\alpha$}
\label{alg:hdr_grid}
\begin{algorithmic}[1]
\Require Density function $p(y|x)$ (or $q(\theta|x)$), grid $\mathcal{G} = \{y_i\}_{i=1}^N$ (or $\{\theta_i\}_{i=1}^N$\footnotemark), level $\alpha \in (0,1)$
\Ensure Approximate HDR set $\widehat{\mathcal{C}}_\alpha$

\State Let $\Delta V$ denote the grid cell volume
\State Evaluate densities: $d_i \gets p(y_i|x)$ for all $y_i \in \mathcal{G}$
\State Sort grid points in descending order of density: $d_{(1)} \geq d_{(2)} \geq \cdots \geq d_{(N)}$
\State Initialise cumulative mass: $M \gets 0$
\State Initialise set: $\widehat{\mathcal{C}}_\alpha \gets \emptyset$

\For{$i = 1$ to $N$}
    \State Add point: $\widehat{\mathcal{C}}_\alpha \gets \widehat{\mathcal{C}}_\alpha \cup \{y_{(i)}\}$
    \State Update mass: $M \gets M + d_{(i)} \cdot \Delta V$
    \If{$M \geq 1- \alpha$}
        \State \textbf{break}
    \EndIf
\EndFor

\State \Return $\widehat{\mathcal{C}}_\alpha$
\end{algorithmic}
\end{algorithm}
\footnotetext{We now drop the equivalence with $\theta$. Simply substitute $y$ for $\theta$ in what follows to obtain the algorithm for an HDR of a 2nd order distribution.}

\subsection{Additional results: Without Oracle}
\label{appx:selective-no-oracle}
\Cref{fig:additional-results} shows the sensitivity of QUEST TU measures to knowledge of $p^*$. The advantage seen in \Cref{fig:selective} is reversed, with local QUEST performing comparably to competitors while the former frontrunner, global QUEST, lags behind in unimodal settings (although it retains a slight advantage in bimodal settings). This suggests both that QUEST TU is sensitive to the quality of the model's estimate of $p^*$, and that the properties which make QUEST suited to bimodal data offset some of this sensitivity in settings to which the measure is well suited. 
The interaction between DGP and inferential choices for approximating $p^*$ is an important area for further study. 
% Further investigation is required to definitively conclude which settings and inferential choices under which these effects will be most influential.
\begin{figure}
\centering
\includegraphics[width=\linewidth]{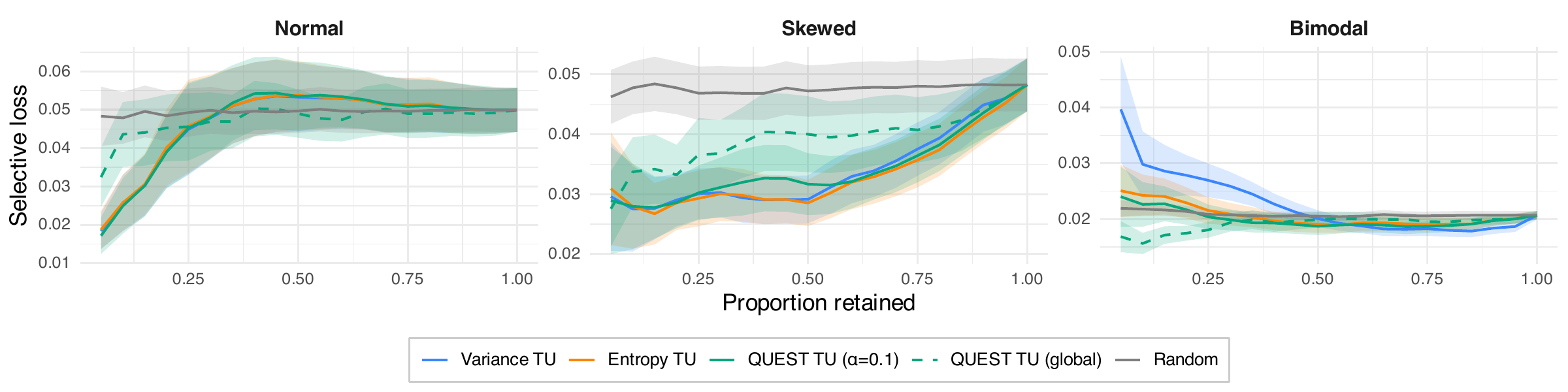} %\includegraphics[width=0.5\linewidth]{Figures-QUEST/selective.pdf}
\caption{Selective learning curves for deep MoE ensembles on three different DGPs. Shading represents standard errors across 20 trials. Instead of the obtaining the oracle density, the model takes the maximum likelihood estimator from among the $M=10$ ensemble members and uses this as a plug-in estimate of $p^*$.}
\label{fig:additional-results}
\end{figure}

\subsection{Compute requirements}
\label{subsec:compute}
Experiments utilised 20 CPU cores on a high performance computer running for a maximum of 24 hours. Maximum memory use was 100GB per repeat. Disk storage requirements were approximately 20GB. No substantial additional compute requirements were need, e.g. for preliminary/failed experiments.

\subsection{Language and code assets}
\label{subsec:appx-python}
We make the experimental code publicly available here \url{https://github.com/sjgoring/modal-uq-public}.
It is written in Python, and we gratefully acknowledge here the use of the following libraries: numpy \citep{harris2020array}, scipy \citep{2020SciPy-NMeth}, joblib \href{https://pypi.org/project/joblib/}{[link]}, matplotlib \citep{Hunter:2007}, scikit-learn \citep{scikit-learn}, torch \cite{DBLP:journals/corr/abs-1912-01703}, cgmm \href{https://cgmm.readthedocs.io/en/latest/}{[link]}.

\section{Supplementary experiment}
We now provide details of, and results from, a supplementary active learning experiment.

\subsection{Active learning}
We use a pool-based procedure with labelled set $\mathcal{D}_L$ and unlabeled pool $\mathcal{D}_U$.
At each acquisition round:
\begin{enumerate}
  \item Train the model on $\mathcal{D}_L$.
  \item Compute acquisition scores $a(x)$ on $x \in \mathcal{D}_U$ using a chosen uncertainty measure.
  \item Select the top-$k$ points by $a(x)$ and add them (with labels) to $\mathcal{D}_L$.
\end{enumerate}
Performance is tracked on a fixed test set after each acquisition round. Unless otherwise stated:
initial labelled size $=200$, batch size $k=200$, and number of rounds $=200$, giving a final pool size $|\mathcal{D}_L|=1200$. 

As with the selective prediction experiment we repeat each experiment 20 times, averaging the reported loss over each of the repeats and reporting standard errors via the shaded regions of the graphs.

\subsection{Data}\label{subsec:mpe_appendix}
We utilise the three synthetic data sets from the selective prediction experiment, and introduce one new data set---the \emph{Multi-Particle Environement}.
We provide an extended description of the modified \verb|simple_adversary| environment. The base environment from PettingZoo's MPE suite \cite{terry2021pettingzoo} is a 2D continuous-control setting in which agents move according to second-order motion dynamics with collisions enabled. The agents have a discrete action space which corresponds to applying a fixed force in a particular cardinal direction, as well as a \textit{no-op} action.

The environment contains one \textit{good} (ego) agent, one or more stationary adversaries acting as obstacles, and a single goal landmark. At reset, the goal is placed within a bounded square region, the ego is placed in an opposing region with a minimum separation from the goal, and obstacles staggered perpendicularly along $\vec{\tau}_{\text{opt}}$. This constructive placement guarantees that the optimal trajectory is geometrically obstructed, requiring the good agent to deviate and resulting in multimodal behaviour suitable for our test cases. An example is shown in \Cref{fig:MPE-example}.

The ego is trained with a Deep Q Network (DQN)~\cite{mnih2015human} policy under a shaped reward which combines a dense goal-seeking term $-\lVert \mathbf{p}_{\text{ego}} -\mathbf{p}_{\text{goal}} \rVert$, a quadratic soft-hinge avoidance term $-w \cdot \max(0, R - d)^2 / R$ where $d$ is the distance to the nearest adversary and $R$ a safety radius, and a goal-reaching bonus when the ego enters a small ball around the goal. The weights are tuned so that the agent prefers both detours rather than collisions and goal-seeking rather than being overly cautious.

For each of $\verb|n_states|$ initial world configurations, we run $\verb|n_traj|$ independent rollouts under fixed $\epsilon$-greedy execution. The input $x$ is the world state including positions and velocities of all entities, flattened. The target $y$ is computed by projecting the ego position at every timestep onto the perpendicular of $\vec{\tau}_{\text{opt}}$, identifying the timestep of maximum absolute deviation, and recording the signed value at that point (left being negative, right being positive). Variance across rollouts from each initial state comes from $\epsilon$-greedy stochasticity acting on a symmetric layout: when the obstacle lies close to $\vec{\tau}_{\text{opt}}$, left and right detours have similar expected return, and small early perturbations push the policy onto one branch. The resulting per-state distribution $p(y \mid x)$ is approximately multimodal.

\begin{figure}[ht]
    \centering
    \includegraphics[width=.7\linewidth]{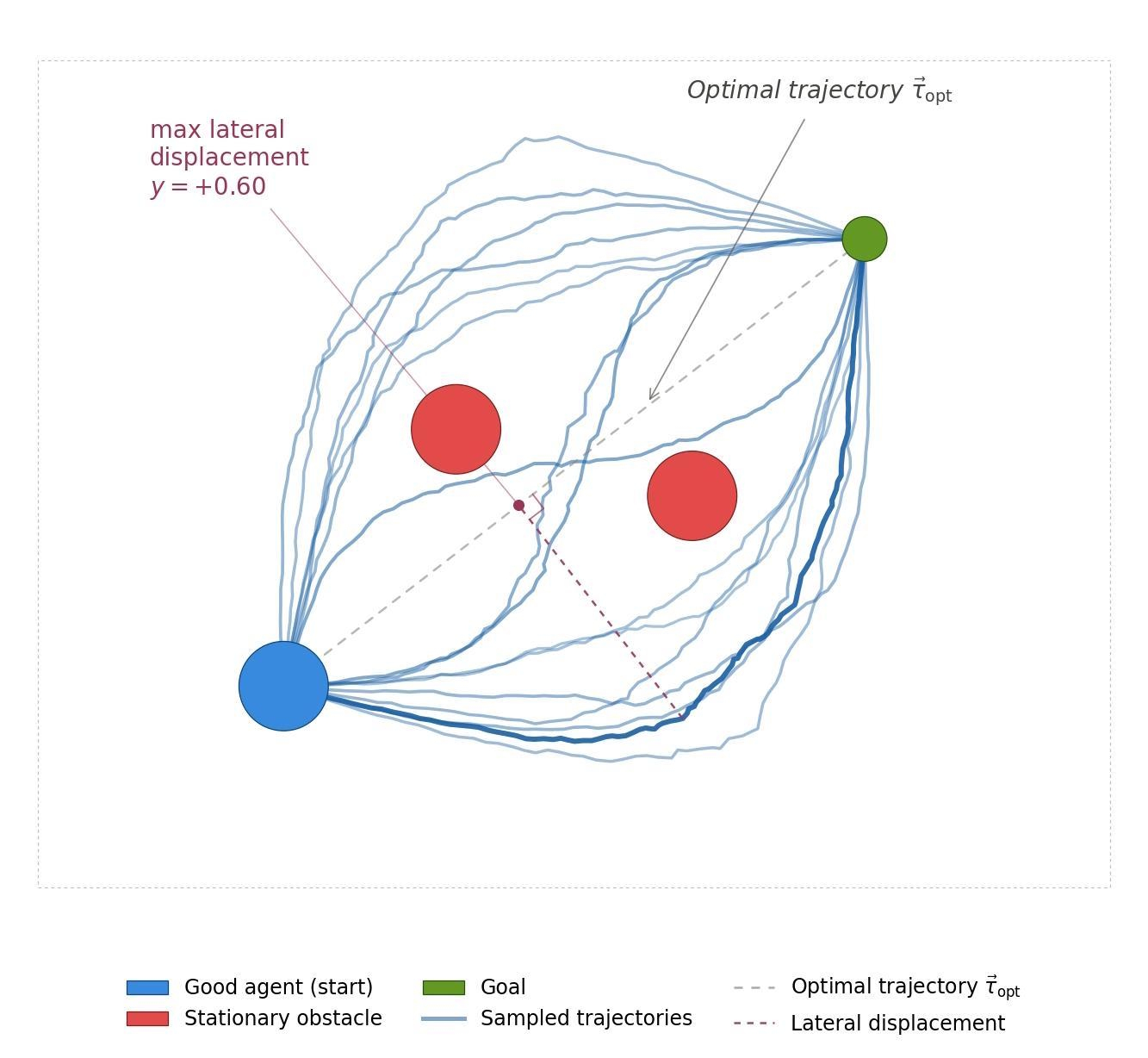}
    \caption{Example MPE initial world configuration with sample trajectories, optimal trajectory and maximum lateral displacement shown.}
    \label{fig:MPE-example}
\end{figure}

\subsection{Model and uncertainty Measures}
We utilise the ensemble mixture of experts model as outlined in \cref{app:moe-description}, and 
utilise the epistemic uncertainty variants of the measure families outlined in \Cref{tbl:appx-UMs}.

\subsection{Evaluation metric}
We evaluate model performance using average absolute distance between the predicted and true mode over points in the test set $x_{test}$, that is:
\begin{equation*}
    \mathcal{L}(\hat{y}|x) = \sum_{x\in x_{test}}\frac{1}{|x_{test}|}|(y^*|x)-(\hat{y}|x)|
\end{equation*}
where as in \Cref{sec:appendix_experiments} $y^{*}=\arg\max_{y},p^{*}(y\mid x), \hat{y}=\arg\max_{y},\hat{p}(y\mid x)$, and $p^*$ is the true density and $\hat{p}$ the learnt density. 
% \subsection{Compute requirements, language and code assets}
See \Cref{subsec:compute,subsec:appx-python} for compute requirements and code assets.

\subsection{Results and interpretation}
\begin{figure}[ht]
    \centering
\begin{tabular}{l l}
    \includegraphics[width=0.45\linewidth,trim={0 0 0 1cm},clip]{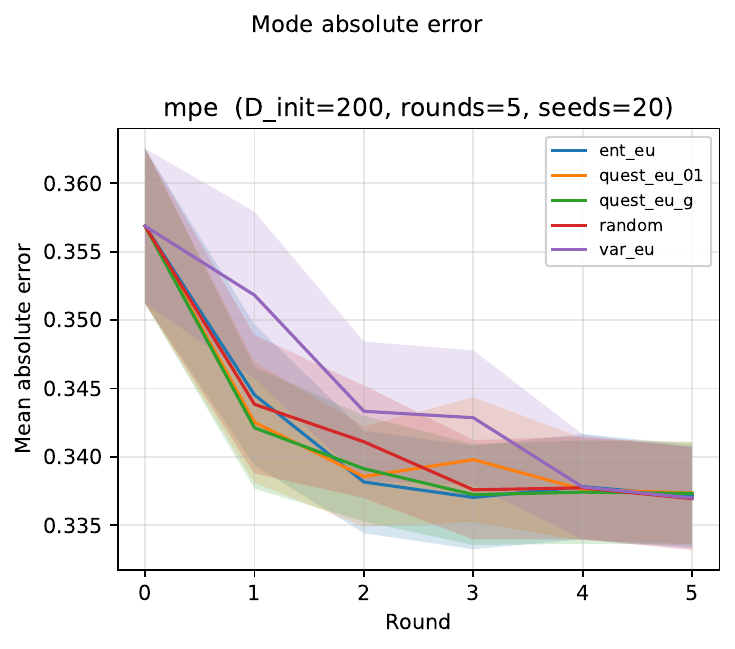} 
    &  \includegraphics[width=0.45\linewidth,trim={0.6cm 0 0 1cm},clip]{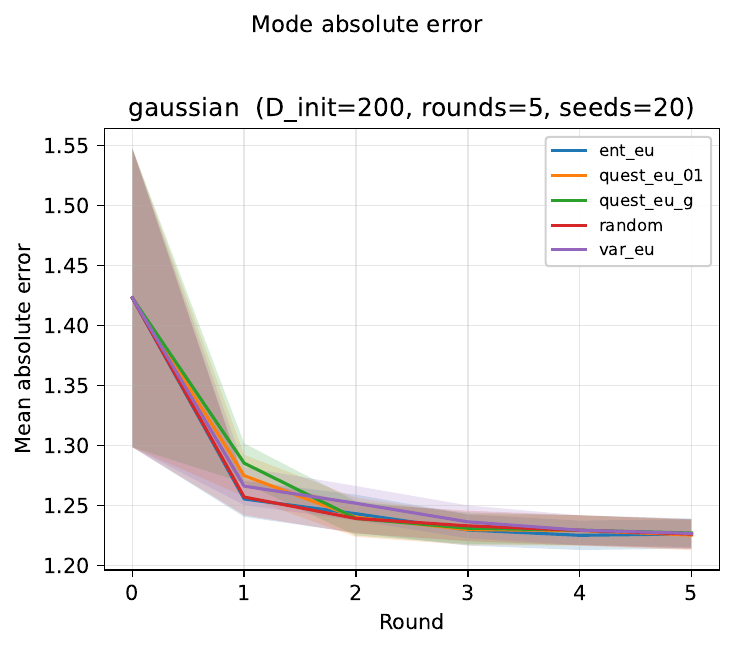}\\
     \includegraphics[width=0.45\linewidth,trim={12.5cm 0 0 1cm},clip]{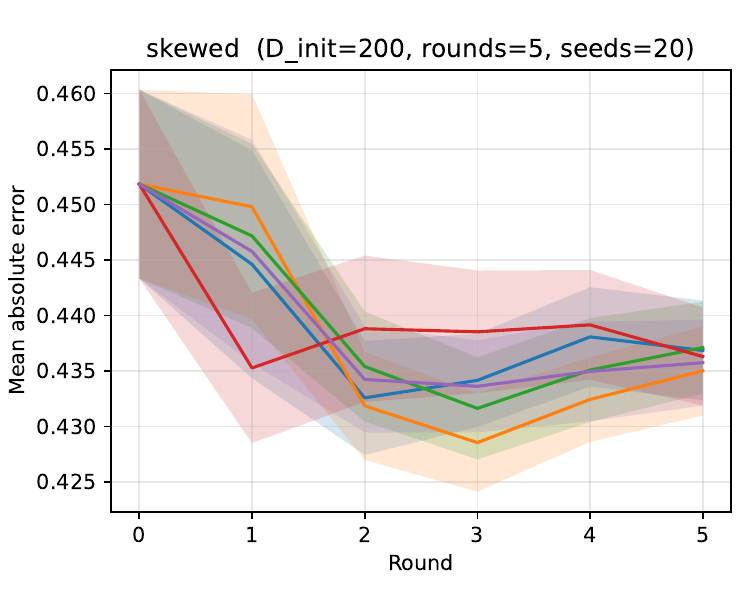}
     & \includegraphics[width=0.45\linewidth,trim={13.1cm 0 0 1cm},clip]{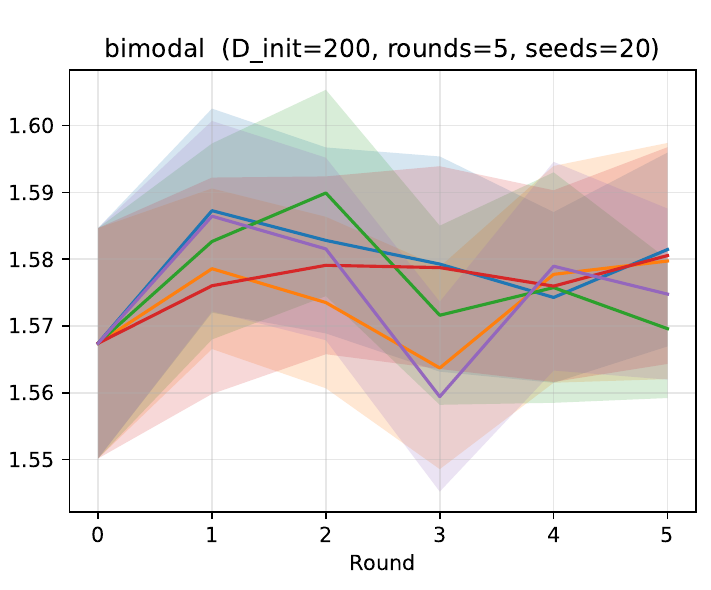}
\end{tabular}
    \caption{Active learning experiment with ensemble MoE model across four datasets. Pool labelling conducted by EU measures from the variance, entropy, QUEST-local and QUEST-global families. All measures are compared to a random baseline.}
    \label{fig:AL}
\end{figure}
\Cref{fig:AL} shows QUEST performance inline with competitor measures on these 4 data sets. 
As in the selective prediction experiment, the Oracle density is utilised.
This ensures the best possible uncertainty measurement (w.r.t faithfulness to the true value) for both the variance and differential entropy measures of EU. 
The Oracle is of no use to QUEST--which we might think would put it at a disadvantage--the measure of 2nd order HDR volume is wholly subjective. 
Yet despite this, both QUEST measures hold their own and perform equally well whilst their competitors are maximally boosted by the presence of the Oracle.
A line of further enquiry is to see the effect of various inferential choices (see \citet{schweighofer2025, fishkov_uncertainty_2025}) required to estimate the Oracle distribution on the performance of variance/entropy.
We suspect this might be where QUEST begins to exhibit superior relative performance.

\section{Admissible measures of $TU$}\label{app:admissible}

While total variation distance offers a particularly transparent choice---bounded by $1$ and easily interpreted as the maximum probability disagreement on any event---several other $f$-divergences satisfy the admissibility conditions of \Cref{sec:method} and yield equally valid $TU$ constructions. We sketch three alternatives here.

The \textit{squared Hellinger distance} uses $f(t) = (\sqrt{t} - 1)^2$, with $f(0) = 1$ and $f'(\infty) = 1$, both finite. The resulting divergence $H^2(p, q) = \int (\sqrt{p} - \sqrt{q})^2\, dx$ is bounded above by $M_f = 2$, with the maximum attained for distributions with disjoint support. The natural penalty function is $g_f(D) = 1/(1 - D/2)$, giving:
$$\mathrm{TU}_{\mathrm H} := V_\alpha(p_{\theta^*}) \cdot \frac{1}{1 - H^2(p^*_\alpha, \hat{p}_\alpha)/2}.$$

The \textit{Jensen-Shannon divergence} is a symmetrized version of KL: $\mathrm{JSD}(p, q) = \frac{1}{2}D_{\mathrm{KL}}(p \| m) + \frac{1}{2}D_{\mathrm{KL}}(q \| m)$ where $m = (p+q)/2$. It corresponds to $f(t) = \frac{1}{2}[t \log t - (t+1)\log\frac{t+1}{2}]$, with $f(0) = f'(\infty) = \log 2 / 2$, giving $M_f = \log 2$. The penalty $g_f(D) = 1/(1 - D/\log 2)$ yields:
$$\mathrm{TU}_{\mathrm{JS}} := V_\alpha(p_{\theta^*}) \cdot \frac{1}{1 - \mathrm{JSD}(p^*_\alpha, \hat{p}_\alpha)/\log 2}.$$

The \textit{triangular discrimination} (aka Le Cam distance) uses $f(t) = (t-1)^2/(t+1)$, with $f(0) = 1$ and $f'(\infty) = 1$, giving $M_f = 2$. The penalty $g_f(D) = 1/(1-D/2)$ yields:
$$\mathrm{TU}_\Delta := V_\alpha(p_{\theta^*}) \cdot \frac{1}{1 - \Delta(p^*_\alpha, \hat{p}_\alpha)/2},$$
where $\Delta(p, q) = \int (p-q)^2/(p+q)\, dx$.

In practice, the choice between these constructions reflects a modeling decision about which kind of calibration error is most important to detect. We default to the TVD-based measure in the main text for its interpretive simplicity and its historical primacy in statistical distance literatures.

{
\section{Further remarks on levelling} \label{sec:app-levelling}

Levelling represents a principled and general concept of spread with several notable advantages. First, it applies to distributions of bounded or unbounded support, as well as distribution pairs with identical or distinct (potentially even disjoint) support. Second, it preserves the intuition that a Dirac mass and the uniform distribution represent natural limits. To see this, observe that levelling can never produce a Dirac mass, whereas for fixed support $\mathcal X$, a uniform $p$ is maximally levelled. (Both properties follow from $\mathrm{L}1$.(c).)
Finally, because levelling does not rely on a distribution's moments, it is well-defined even for densities with infinite variance, such as the Cauchy.

To see this, consider the examples in \Cref{fig:levelling}(A), which differ only in their scale parameter $\gamma$.
Levelling captures the intuitive ordering of these densities by their concentration. 
For any pair $p_\gamma$ and $p_{\gamma'}$ with $\gamma < \gamma'$, condition $\mathrm{L}1$ holds with $A$ the central region where $p_\gamma$ exceeds $p_{\gamma'}$, and $B$ the tails where the relationship reverses. The two densities cross at the unique points where their values coincide; on $A$, $p_\gamma$'s mass is excess relative to $p_{\gamma'}$, and this mass is exactly redistributed to the tails $B$ in $p_{\gamma'}$, satisfying $\mathrm{L}1$'s mass-conservation and level-set conditions. 
Variance fails to make these comparisons because it is undefined for all three densities, while levelling correctly identifies the ordering $p_{1/2} \prec p_1 \prec p_2$.

\begin{figure}[t]
    \centering
    \includegraphics[width=0.9\linewidth]{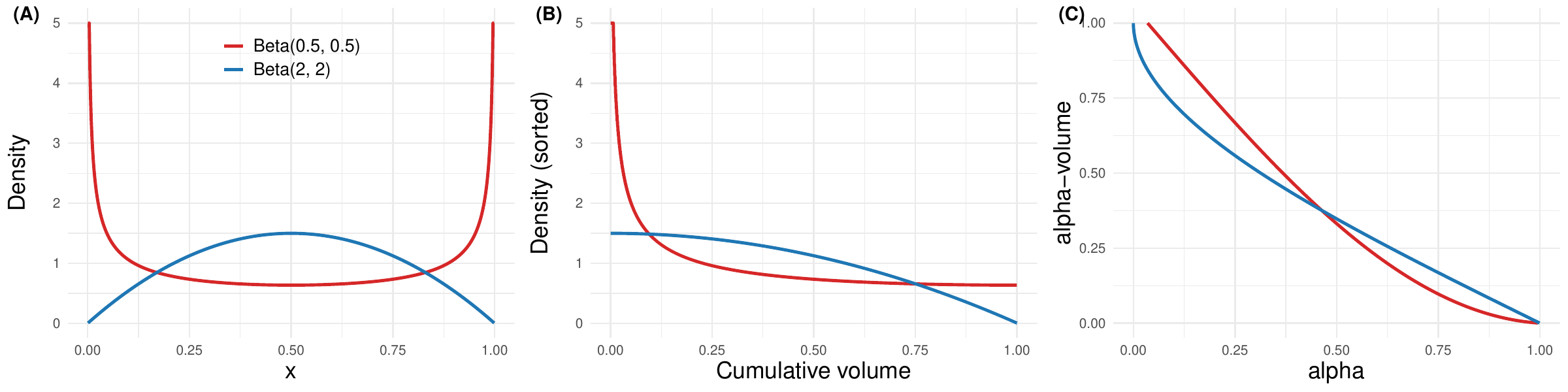}
    \caption{$\mathrm{Beta}(0.5, 0.5)$ and $\mathrm{Beta}(2, 2)$ are incomparable w.r.t. levelling. \textbf{(A)} Density curves for both distributions. \textbf{(B)} Decreasing rearrangements cross. \textbf{(C)} Coverage curves therefore cross as well.}
    \label{fig:betas}
\end{figure}

We remark that levelling imposes a \textit{partial} order on distributions, as there exist some pdf pairs such that neither is a levelling of the other. 
This occurs when their decreasing rearrangements (and therefore coverage curves) cross, as in the case of $\mathrm{Beta}(0.5, 0.5)$ and $\mathrm{Beta}(2, 2)$ (see \Cref{fig:betas}).
This means that levelling is not \textit{strictly} more general than variance, as there are cases (such as this one) where variance gives a partial order while levelling abstains. In particular, we have $\mathrm{Var}\big(\mathrm{Beta}(2, 2)\big) < \mathrm{Var}\big(\mathrm{Beta}(0.5, 0.5)\big)$. 
Note that these distributions are still comparable w.r.t. local QUEST measures at any \textit{fixed} $\alpha$---the incomparability arises because our ordering flips as $\alpha$ varies (see \Cref{fig:betas}(C)). 
As for the global measure, we have $IV(\mathrm{Beta}(1/2, 1/2)) = 1 - 2/\pi \approx 0.363$ and $IV(\mathrm{Beta}(2, 2)) = 0.375$. Thus $IV$ and variance give opposing results in this case.

Perhaps more interestingly, we can construct examples where \textit{levelling} and variance render contradictory verdicts. This occurs, for instance, when $p$ is a bimodal distribution with sharp peaks spaced far apart, while $p'$ redistributes mass from those peaks to the centre, subject to $\mathrm{L}1$ constraints. This will tend to \textit{reduce} variance, despite the fact that $p \prec p'$. 
Appealing to the intuition that ``spread'' should track ``distance-to-uniformity'', we argue that variance renders the wrong judgment in this case.

\section{Further remarks on uncertainty axioms}\label{app:axioms}

\paragraph{Alternative axioms} Although our axiom $\mathrm{A}1$ accords with that of \citet{wimmer2023} and \citet{sale_second-order-dist_2023}, QUEST measures do not satisfy a stronger version proposed by \citet{bulte_axiomatic_2025}:
\begin{itemize}[noitemsep]
    \item[$\mathrm{A}1^*$:] $EU(q)=0$ iff $q = \delta_\theta$.
\end{itemize}
Specifically, we respect the reverse but not the forward direction of this biconditional.
The latter fails, for instance, on Dirac mixtures or joint distributions with support on a submanifold. 
These are cases of measure zero sets that do not correspond to any single Dirac mass. 
We can generalize the point by proposing our own modified biconditional:
\begin{itemize}[noitemsep]
    \item[$\mathrm{A}1'$:] $EU(q)=0$ iff $\lambda\big( \mathrm{supp}(q) \big)=0$,
\end{itemize}
which QUEST measures do satisfy. (Entropy and variance, however, do not.) 
While $\mathrm{A}1'$ may be a reasonable axiom to adopt, we do not develop the point any further here.

Another proposed axiom is due to \citet{sale_second-order-dist_2023}:
\begin{itemize}
    \item[$\mathrm{A}6$:] For all $p,q$, (i) $AU(p) \leq TU(p, q)$ and (ii) $EU(q) \leq TU(p, q)$.
\end{itemize}
QUEST measures satisfy $\mathrm{A}6$.(i) but not $\mathrm{A}6$.(ii), since a flat second-order distribution may result in a well-calibrated predictor. However, the inequality is arguably nonsensical in our setting, since the $k$-dimensional Lebesgue measure on the lhs is not obviously comparable to the $d$-dimensional Lebesgue measure on the right. Thus this axiom is inappropriate for our framework.

\paragraph{Should $EU$ be purely subjective?} Given that $TU$ decreases as model calibration improves---an undisputed premise, as far as we are aware---all UQ methods face a choice. 
Either (1) decide that $TU$ is fully determined by $AU$ and $EU$, in which case the latter is necessarily truth-relative; or (2) embrace a subjective notion of $EU$, at the cost of complicating the classical uncertainty typology. 
We argue that (1) conflates calibration and confidence. Agents who are confidently miscalibrated and confidently well-calibrated experience similar subjective uncertainty (e.g., agents $A, B$ with $q_A = \delta_\theta, q_B = \delta_{\theta^*}$ for some $\theta \neq \theta^*$). 
% Similarly, agents can enjoy comparable calibration with differing levels of confidence (e.g., if agent $A$ is confident about $\mu$ but uncertain about $\sigma$, and $B$ is confident about $\sigma$ but uncertain about $\mu$, they could produce the same posterior predictive). 
UQ frameworks that choose (1) over (2) have no way of properly diagnosing such cases. 

A plausible objection runs as follows. Whatever the functional form of $TU$, it should surely be monotone in $EU$. Since $AU$ is fixed by the DGP, this means that $EU$ should track some notion of error or miscalibration. On a purely subjective measure, an irrational agent may fail to update $q$ properly given evidence, leading to decreased $EU$ but increased $TU$.
This is nonsensical. Call this \textit{the monotonicity objection}.

The monotonicity objection implicitly assumes something like the additive decomposition of $TU$, which holds for variance- and entropy-based measures. Whatever the merits of this view---which has been challenged by recent works \citep{marshall2011inequalities, wimmer2023}---it should not be assumed upfront, but rather derived from first principles. 
To do otherwise when attempting to formalize UQ would be to beg the question. Interestingly, our framework accords with the intuition of the monotonicity objection when $\theta^* \in \Theta$ and the agent has a consistent learning procedure. In this setting, $q$ will tend to concentrate around true parameter values as data accumulates, ensuring a monotone relationship between epistemic and total uncertainty. 
We point out that model misspecification and/or irrational belief updates are genuine problems, but hardly unique to our framework---these would cause headaches for any effort to model uncertainty.

So the objection loses force when a single agent learns over time. But what about a comparison between two separate agents, one of whom is more confident in their predictions than the other? On QUEST measures, the more confident agent by definition has lower $EU$, even if their predictions are wildly miscalibrated. We argue that this is exactly what a good uncertainty measure should do, otherwise we have no language for disambiguating what is the same and what is different between agents who are equally sure of their models, despite vastly different empirical performance. We locate the similarity in $EU$ and the difference in $TU$, which directly rewards the better calibrated model. The trade-off is that we lose the monotonicity property in general---not in the case of a single consistent learner, but possibly when comparing multiple models. This is why $TU$ fails to satisfy $\mathrm{A}4$.

As a final comment, we anticipate a question as to why we define $EU$ in terms of $q$ rather than the posterior predictive $\hat p$. The answer is simple. We may be confident that $\sigma$ is very large (low subjective uncertainty, high $V_\alpha(\hat p)$) or very uncertain about $\mu$ but with fixed small $\sigma$ (high subjective uncertainty, low $V_\alpha(\hat p)$).
The true focus of a subjective uncertainty measure is therefore the second-order distribution $q$, not the posterior predictive.

% \newpage
% \input{checklist}
\end{document}